\patchcmd{\maketitle}{\@copyrightpermission}{
  \begin{minipage}{0.3\columnwidth}
    \href{http://creativecommons.org/licenses/by/4.0/}{\includegraphics[width=0.90\textwidth]{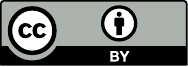}}
  \end{minipage}\hfill
  \begin{minipage}{0.7\columnwidth}
    \href{http://creativecommons.org/licenses/by/4.0/}{This work is licensed under a Creative Commons Attribution International 4.0 License.}
  \end{minipage}
  \vspace{5pt}
}{}{}
\acrodef{CF}{collaborative filtering}
\acrodef{LTR}{learning to rank}
\acrodef{NDCG}{normalized discounted cumulative gain}
\acrodef{DCG}{discounted cumulative gain}
\acrodef{VAE}{variational autoencoder}
\acrodef{VAE}{variational autoencoder}
\acrodef{ELBO}{evidence lower bound objective}
\acrodef{IPS}{inverse propensity scoring}
\acrodef{BPR}{bayesian personalized ranking}
\acrodef{MF}{matrix factorization}
\acrodef{MNAR}{missing-not-at-random}
\acrodef{ULTR}{unbiased learning-to-rank}
\acrodef{CLTR}{counterfactual learning to rank}
\acrodef{LOLN}{law of large numbers}
\acrodef{CRM}{counterfactual risk minimization}
\acrodef{IS}{importance sampling}
\acrodef{i.i.d}{independent and identically distributed}
\acrodef{CRM}{counterfactual risk minimization}
\acrodef{PL}{Plackett-Luce}
\acrodef{CTR}{click through rate}
\acrodef{SEA}{safe exploration algorithm}
\acrodef{GENSPEC}{generalization and specialization }
\acrodef{VCRM}{variational counterfactual risk minimization}
\acrodef{SGD}{stochastic gradient descent}
\acrodef{DR}{doubly robust}
\acrodef{DM}{direct method}
\acrodef{ERC}{exposure ratio clipping}
\acrodef{PRPO}{proximal ranking policy optimization}
\acrodef{PPO}{proximal policy optimization}
\acrodef{RL}{reinforcement learning}
\theoremstyle{plain}
\newtheorem{theorem}{Theorem}[section]
\newtheorem{lemma}[theorem]{Lemma}
\theoremstyle{definition}
\newtheorem{assumption}[theorem]{Assumption}
\theoremstyle{remark}
\newcommand{\headernodot}[1]{\vspace{1mm}\noindent\textbf{#1}}
\newcommand{\header}[1]{\headernodot{#1.}}
\author{Shashank Gupta}
\affiliation{%
  \institution{University of Amsterdam}
  \city{Amsterdam}
  \country{The Netherlands}
}
\email{s.gupta2@uva.nl}
\author{Harrie Oosterhuis}
\affiliation{%
  \institution{Radboud University}
  \city{Nijmegen}
  \country{The Netherlands}
}
\email{harrie.oosterhuis@ru.nl}
\author{Maarten de Rijke}
\affiliation{%
  \institution{University of Amsterdam}
  \city{Amsterdam}
  \country{The Netherlands}
}
\email{m.derijke@uva.nl}
\title[Practical and Robust Safety Guarantees for Advanced Counterfactual Learning to Rank]{Practical and Robust Safety Guarantees for\\ Advanced Counterfactual Learning to Rank}
\begin{document}

\begin{abstract}

\Ac{CLTR} can be risky and, in various circumstances, can produce sub-optimal models that hurt performance when deployed.
Safe \ac{CLTR} was introduced to mitigate these risks when using \acl{IPS} to correct for position bias.
However, the existing safety measure for \ac{CLTR} is not applicable to state-of-the-art \ac{CLTR} methods, cannot handle trust bias, and relies on specific assumptions about user behavior.

Our contributions are two-fold.
First, we generalize the existing safe \ac{CLTR} approach to make it applicable to state-of-the-art \acl{DR} \ac{CLTR} and trust bias.
Second, we propose a novel approach, \acfi{PRPO}, that provides safety in deployment without  assumptions about user behavior.
\ac{PRPO} removes incentives for learning ranking behavior that is too dissimilar to a safe ranking model.
Thereby, \ac{PRPO} imposes a limit on how much learned models can degrade performance metrics, \emph{without} relying on any specific user assumptions.
Our experiments show that both our novel safe \acl{DR} method and \ac{PRPO}  provide higher performance than the existing safe \acl{IPS} approach.
However, in unexpected circumstances, the safe \acl{DR} approach can become unsafe and bring detrimental performance.
In contrast, \ac{PRPO} always maintains safety, even in maximally adversarial situations.
By avoiding assumptions, \ac{PRPO} is the first method with \emph{unconditional} safety in deployment that translates to robust safety for real-world applications.
\vspace{-0.25\baselineskip}
\end{abstract}

\begin{CCSXML}
<ccs2012>
    <concept>
        <concept_id>10002951.10003317.10003338.10003343</concept_id>
        <concept_desc>Information systems~Learning to rank</concept_desc>
        <concept_significance>500</concept_significance>
        </concept>
    <concept>
        <concept_id>10002951.10003317.10003347.10003350</concept_id>
        <concept_desc>Information systems~Recommender systems</concept_desc>
        <concept_significance>300</concept_significance>
        </concept>
    </ccs2012>
\end{CCSXML}

\ccsdesc[500]{Information systems~Learning to rank}

\keywords{Learning to Rank; Counterfactual Learning to Rank; Safety}

\maketitle

\acresetall


\section{Introduction}

\Ac{CLTR}~\cite{joachims2017unbiased, wang2016learning, oosterhuis2020learning,gupta2024unbiased} concerns the optimization of ranking systems based on user interaction data using \ac{LTR} methods~\cite{liu2009learning}.
A main advantage of \ac{CLTR} is that it does not require manual relevance labels, which are costly to produce~\citep{qin2010letor,chapelle2011yahoo} and often do not align with actual user preferences~\cite{sanderson2010test}.
Nevertheless, \ac{CLTR} also brings significant challenges since user interactions only provide a heavily biased form of implicit feedback~\citep{gupta2024unbiased}.
User clicks are affected by many different factors, for example, the position at which an item is displayed  in a ranking~\citep{wang2018position, craswell2008experimental}.
Thus, click frequencies provide a biased indication of relevance, that is often more representative of how an item was displayed than actual user preferences~\citep{agarwal2019addressing, wang2016learning}.

To correct for this bias, early \ac{CLTR} applied \ac{IPS}, which weights clicks inversely to the estimated effect of position bias~\cite{joachims2017unbiased, wang2016learning}.
Later work expanded this approach to correct for other forms of bias, e.g., item-selection bias~\citep{oosterhuis2020policy, ovaisi2020correcting} and trust bias~\citep{agarwal2019addressing, vardasbi2020inverse}, and more advanced \ac{DR} estimation~\citep{oosterhuis2022doubly}.
Using these methods, standard \ac{CLTR} aims to create an unbiased estimate of relevance (or user preference) from click frequencies.
In other words, their goal is to output an estimate per document with an expected value that is equal to their relevance.

However, unbiased estimates of \ac{CLTR} have their limitations.
Firstly, they assume a model of user behavior and require an accurate estimate of this model.
If the assumed model is incorrect~\citep{vardasbi2020inverse, oosterhuis2020policy} or its estimated parameters are inaccurate~\citep{joachims2017unbiased, oosterhuis2022doubly}, then their unbiasedness is not guaranteed.
Secondly, even when unbiased, the estimates are subject to variance~\cite{oosterhuis2022reaching}.
As a result, the actual estimated values are often erroneous, especially when the available data is sparse~\citep{oosterhuis2022doubly, gupta2023safe}.
Accordingly, unbiased \ac{CLTR} does not guarantee that the ranking models it produces have optimal performance~\cite{oosterhuis2022reaching}.

\header{Safe \acl{CLTR}}
There are risks involved in applying \ac{CLTR} in practice. 
In particular, there is a substantial risk that a learned ranking model is deployed that degrades performance compared to the previous production system~\citep{gupta2023safe,oosterhuis2021robust,jagerman2020safe}.
This can have negative consequences to important business metrics, making \ac{CLTR} less attractive to practitioners.
To remedy this issue, a \emph{safe} \ac{CLTR} approach was proposed by \citet{gupta2023safe}.
Their approach builds on IPS-based \ac{CLTR} and adds exposure-based risk regularization, which keeps the learned model from deviating too much from a given safe model.
Thereby, under the assumption of a position-biased user model, the safe \ac{CLTR} approach can guarantee an upper bound on the probability of the model being worse than the safe model.

\header{Limitations of the current safe \ac{CLTR} method}
Whilst safe \ac{CLTR} is an important contribution to the field, it has two severe limitations -- both are addressed by this work.
Firstly, the existing approach is only applicable to \ac{IPS} estimation, which is no longer the state-of-the-art in the field~\citep{gupta2024unbiased}, and it assumes a rank-based position bias model~\citep{craswell2008experimental, wang2016learning}, the most basic user behavior model in the field.
Secondly, because its guarantees rely on assumptions about user behavior, it can only provide a conditional notion of safety.
Moreover, since user behavior can be extremely heterogeneous, it is unclear whether a practitioner could even determine whether the safety guarantees would apply to their application.

\header{Main contributions}
Our first contribution addresses the mismatch between the existing safe \ac{CLTR} approach and recent advances in \ac{CLTR}.
We propose a novel generalization of the exposure-based regularization term that provides safety guarantees for both \ac{IPS} and \ac{DR} estimation, also under more complex models of user behavior that cover both position and trust bias.
Our experimental results show that our novel method reaches higher levels of performance significantly faster, while avoiding any notable decreases of performance.
This is especially beneficial since \ac{DR} is known to have detrimental performance when very little data is available~\citep{oosterhuis2022doubly}.

Our second contribution provides an unconditional notion of safety. We take inspiration from advances in \ac{RL}~\cite{wang2020truly,schulman2017proximal,liu2019neural,wang2019trust,queeney2021generalized} and propose the novel \acfi{PRPO} method.
\ac{PRPO} removes incentives for \ac{LTR} methods to rank documents too much higher than a given safe ranking model would.
Thereby, \ac{PRPO} imposes a limit on the performance difference between a learned model and a safe model, in terms of standard ranking metrics.
Importantly, \ac{PRPO} is easily applicable to \emph{any} gradient-descent-based \ac{LTR} method, and makes \emph{no assumptions} about user behavior.
In our experiments, \ac{PRPO} prevents any notable decrease in performance even under extremely adversarial circumstances, where other methods fail.
Therefore, we believe \ac{PRPO} is the first \emph{unconditionally} safe \ac{LTR} method. 

Together, our contributions bring important advances to the theory of safe \ac{CLTR}, by proposing a significant generalization of the existing approach with theoretical guarantees, and the practical appeal of \ac{CLTR}, with the first robustly safe \ac{LTR} method: \ac{PRPO}.
All source code to reproduce our experimental results is available at: 
\url{https://github.com/shashankg7/cikm-safeultr}.

\vspace{-1.1mm}


\section{Related Work}
\label{sec:relatedwork}
\textbf{\Acl{CLTR}.}
\citet{joachims2017unbiased} introduced the first method for \ac{CLTR}, a \ac{LTR} specific adaptation of \ac{IPS} from the bandit literature~\cite{swaminathan2015batch,joachims2016counterfactual,saito2021counterfactual,gupta2024optimal,gupta2023deep, gupta2024first} to correct for position bias.
They weight each user interaction according to the inverse of its examination probability, i.e., its inverse propensity, during learning to correct for the position bias in the logged data. 
This weighting will remove the effect of position bias from the final ranking policy.
\citet{oosterhuis2020policy} extended this method for the top-$K$ ranking setting with item-selection bias, where any item placed outside the top-$K$ positions gets zero exposure probability, i.e., an extreme form of position bias. 
They proposed a policy-aware propensity estimator, where the propensity weights used in \ac{IPS} are conditioned on the logging policy used to collect the data. 

\citet{agarwal2019addressing} introduced an extension of \ac{IPS}, known as Bayes-IPS, to correct for \textit{trust bias}, an extension of position-bias, with false-positive clicks at the higher ranks, because of the users' trust in the search engine. 
\citet{vardasbi2020inverse} proved that Bayes-IPS cannot correct for trust bias and introduced an affine-correction method and unbiased estimator. 
\citet{oosterhuis2021unifying} combined the affine-correction with a policy-aware propensity estimator to correct for trust bias and item-selection bias simultaneously.
Recently, \citet{oosterhuis2022doubly} introduced a \ac{DR}-estimator for \ac{CLTR}, which combines the existing \ac{IPS}-estimator with a regression model to overcome some of the challenges with the \ac{IPS}-estimator. 
The proposed \ac{DR}-estimator corrects for item-selection and trust biases, with lower variance and improved sample complexity. 

\header{Safe policy learning from user interactions}
In the context of offline evaluation for contextual bandits, \citet{thomas2015high} introduced a high-confidence off-policy evaluation framework.
A confidence interval is defined around the empirical off-policy estimates, and there is a high probability that the \textit{true} utility can be found in the interval. 
\citet{jagerman2020safe} extended this framework for safe deployment in the contextual bandit learning setup. 
The authors introduce a \ac{SEA} method that selects with high confidence between a safe behavior policy and the newly learned policy. 
In the context of \ac{LTR}, \citet{oosterhuis2021robust} introduced the \ac{GENSPEC} method, which safely selects between a feature-based and tabular \ac{LTR} model. 
For off-policy learning, \citet{swaminathan2015batch} introduced a \ac{CRM} framework for the contextual bandit setup. 
They modify the \ac{IPS} objective for bandits to include a regularization term, which explicitly controls for the variance of the \ac{IPS}-estimator during learning, thereby overcoming some of the problems with the high-variance of \ac{IPS}.
\citet{wu2018variance} extended the \ac{CRM} framework by using a \textit{risk} regularization, which penalizes mismatches in the action probabilities under the new policy and the behavior policy. 
\citet{gupta2023safe} made this general safe deployment framework effective in the \ac{LTR} setting. 
They proposed an exposure-based risk regularization method where the difference in the document exposure distribution under the new and logging policies is penalized. 
When click data is limited, risk regularization ensures that the performance of the new policy is similar to the logging policy, ensuring safety. 

To the best of our knowledge, the method proposed by~\citet{gupta2023safe} is the only method for safe policy learning in the \ac{LTR} setting. 
While it guarantees safe ranking policy optimization, it has two main limitations:
\begin{enumerate*}[label=(\roman*)]
        \item It is only applicable to the \ac{IPS} estimator; and  
        \item under the position-based click model assumption, the most basic click model in the \ac{CLTR} literature~\citep{joachims2017unbiased, oosterhuis2020learning, gupta2024unbiased}. 
\end{enumerate*} 

\header{Proximal policy optimization} 
In the broader context of \ac{RL}, \acfi{PPO} was introduced as a policy gradient method for training \ac{RL} agents to maximize long-term rewards~\cite{wang2020truly,schulman2017proximal,liu2019neural,wang2019trust,queeney2021generalized}.
\Ac{PPO} clips the importance sampling ratio of action probability under the new policy and the current behavior policy, and thereby, it prevents the new policy to deviate from the behavior policy by more than a certain margin.
\ac{PPO} is not directly applicable to \ac{LTR}, for the same reasons that the \ac{CRM} framework is not: the combinatorial action space of \ac{LTR} leads to extremely small propensities that \ac{PPO} cannot effectively manage~\cite{gupta2023safe}.

\vspace*{-0.43cm}
\section{Background}
\subsection{Learning to rank}
The goal in \ac{LTR} is to find a ranking policy ($\pi$) that optimizes a given ranking metric~\cite{liu2009learning}. 
Formally, given a set of documents ($D$), a distribution of queries $Q$, and the true relevance function ($P(R=1 \mid d)$), \ac{LTR} aims to maximize the following utility function:
\begin{equation}
    U(\pi) =  \sum_{q \in Q} P(q \mid Q) \sum_{d \in D} \omega(d \mid \pi) \; P(R=1 \mid d), \label{true-utility}
\end{equation}
where $\omega(d \mid \pi)$ is the weight of the document for a given policy $\pi$. The weight can be set accordingly to optimize for a given ranking objective, for example, setting the weight to:
\begin{equation}
    \omega_{\text{DCG}}(d \mid q, \pi) = \mathbb{E}_{y \sim \pi( \cdot \mid q)} \mleft[ (\log_2(\textrm{rank}(d \mid y) + 1))^{-1} \mright], \label{rho}
\end{equation}
optimizes \ac{DCG}~\citep{jarvelin2002cumulated}.
For this paper, we aim to optimize the expected number of clicks, so we set the weight accordingly~\cite{oosterhuis2022doubly,gupta2023safe,yadav2021policy}. 

\subsection{Assumptions about user click behavior}
The optimization of the true utility function (Eq.~\ref{true-utility}) requires access to the document relevance ($P(R=1 \mid d)$). 
In the \ac{CLTR} setting, the relevances of documents are not available, and instead, click interaction data is used to estimate them~\cite{joachims2017unbiased, wang2016learning, oosterhuis2020learning}.
However, naively using clicks to optimize a ranking system can lead to sub-optimal ranking policies, as clicks are a biased indicator of relevance~\cite{chuklin-click-2015, joachims2002optimizing, joachims2017unbiased, craswell2008experimental}.
\ac{CLTR} work with theoretical guarantees starts by assuming a model of user behavior.
The earliest \ac{CLTR} works~\citep{joachims2017unbiased, wang2016learning} assume a basic model originally proposed by \citet{craswell2008experimental}: 
\begin{assumption}[\emph{The rank-based position bias model}]
    \label{assumption:positionbias}
     The probability of a click on document $d$ at position $k$ is the product of the rank-based examination probability and document relevance:
    \begin{equation}    
    P(C = 1 \mid d, k) = P(E=1 \mid k) P(R=1 \mid d) = \alpha_k P(R=1 \mid d).
    \end{equation}
\end{assumption}

\noindent%
Later work has proposed more complex user models to build on~\citep{gupta2024unbiased}.
Relevant to our work is the model proposed by \citet{agarwal2019addressing}, and its re-formulation by \citet{vardasbi2020inverse}; it is a generalization of the above model to include a form of trust bias:

\begin{assumption}[\emph{The trust bias model}]
\label{assumption:trustbias}
The probability of a click on document $d$ at position $k$ is an affine transformation of the relevance probability of $d$ in the form:
\begin{equation}
        P(C = 1 \mid d, k) = \alpha_k P(R=1 \mid d) + \beta_k,  \label{affine-click-model}
\end{equation}
where $\forall k, \alpha_k \in [0,1] \land \beta_k\in [0,1] \land (\alpha_k + \beta_k) \in [0,1]$.
\end{assumption}

\noindent%
Whilst it is named after trust bias, this model actually captures three forms of bias that were traditionally categorized separately: rank-based position bias, item-selection bias, and trust bias.
Position bias was originally approached as the probability that a user would examine an item, which would decrease at lower positions in the ranking~\citep{wang2018position, craswell2008experimental, joachims2017unbiased, wang2016learning}.
In the trust bias model, this effect can be captured by decreasing $\alpha_k + \beta_k$ as $k$ increases.
Additionally, with $\forall k, \beta_k = 0$, the trust bias model is equivalent to the rank-based position bias model.
Item-selection bias refers to users being unable to see documents outside a top-$K$, where they receive zero probability of being examined or interacted with~\citep{oosterhuis2020policy}.
This can be captured by the trust bias model by setting $\alpha_k + \beta_k = 0$ when $k > K$.
Lastly, the key characteristic of trust bias is that users are more likely to click on non-relevant items when they are near the top of the ranking~\citep{agarwal2019addressing}.
This can be captured by the model by making $\beta_k$ larger as $k$ decreases~\citep{vardasbi2020inverse}.
Thereby, the trust bias model is in fact a generalization of most of the user models assumed by earlier work~\citep{oosterhuis2021unifying, gupta2024unbiased}. The following works all assume models that fit Assumption~\ref{assumption:trustbias}: \citep{vardasbi2020inverse, agarwal2019addressing, oosterhuis2021unifying, oosterhuis2020policy, wang2016learning, wang2018position, oosterhuis2022doubly, oosterhuis2021robust, gupta2023safe, agarwal2019general, ovaisi2020correcting}.

\subsection{Counterfactual learning to rank}
This section details the \emph{policy-aware} \acfi{IPS} estimator proposed by \citet{oosterhuis2020policy} and the \acfi{DR} estimator by \citet{oosterhuis2022doubly}.

First, let $\mathcal{D}$ be a set of logged interaction data:
%
$
    \mathcal{D} = \big\{q_i, y_i, c_i \big\}^N_{i=1}$,
%
where each of the $N$ interactions consists of a query $q_i$, a displayed ranking $y_i$, and click feedback $c_i(d) \in \{0,1\}$ that indicates whether the user clicked on the document $d$ or not.
Both policies use propensities that are the expected $\alpha$ values for each document:
\begin{equation}
        \rho_{0}(d \mid q_i, \pi_0) =  \mathbb{E}_{y \sim \pi_{0}(q_i)} \big[ \alpha_{k(d)}  \big] = \rho_{i,0}(d).
    \label{policy-aware-exposure}
\end{equation}
Similarly, to keep our notation short, we also use $\omega(d \mid q_i, \pi) = \omega_i(d)$.
Next, the policy-aware \ac{IPS} estimator is defined as:
\begin{equation}
    \hat{U}_{\text{IPS}}(\pi) = \frac{1}{N} \sum_{i=1}^{N} \sum_{d \in D}  \frac{\omega_i(d)}{\rho_{i,0}(d)} c_i(d).
    \label{cltr-obj-ips-positionbias}
\end{equation}
\citet{oosterhuis2020policy} prove that under the rank-based position bias model (Assumption~\ref{assumption:positionbias}) and when $\forall (i,d),  \rho_{i,0}(d) > 0$, this estimator is unbiased: $\mathbb{E}[\hat{U}_{\text{IPS}}(\pi)] = U(\pi)$.

The \ac{DR} estimator improves over the policy-aware \ac{IPS} estimator in terms of assuming the more general trust bias model (Assumption~\ref{assumption:trustbias}) and having lower variance.
\citet{oosterhuis2022doubly} proposes the usage of the following $\omega$ values for the policy $\pi$:
\begin{equation}
\omega(d \mid q_i, \pi) = \mathbb{E}_{y \sim \pi(q_i)} \big[ \alpha_{k(d)}  + \beta_{k(d)} \big] = \omega_i(d),
    \label{eq:omega}
\end{equation}
since with these values $U$ (Eq.~\ref{true-utility}) becomes the number of expected clicks on relevant items under the trust bias model; $U = (\alpha_{k}  + \beta_{k})P(R=1 \mid d,q) = P(C = 1, R = 1 \mid k,d,q)$.
We follow this approach and define the $\omega$ values for the logging policy $\pi_{0}$ as:
\begin{equation}
    \omega_{0}(d \mid q_i, \pi_{0}) = \mathbb{E}_{y \sim \pi_{0}(q_i)} \big[ \alpha_{k(d)}  + \beta_{k(d)} \big] = \omega_{i,0}(d).
    \label{eq:omega_logging}
\end{equation}
The \ac{DR} estimator uses predicted relevances in its estimation, i.e., using predictions from a regression model.
Let $\hat{R}_{i}(d) \approx P(R = 1 \mid d, q_i)$ indicate a predicted relevance; then the utility according to these predictions is:
\vspace{-\baselineskip}
\begin{equation}
    \hat{U}_{\text{DM}}(\pi) = \frac{1}{N} \sum_{i=1}^{N} \sum_{d \in D} \omega_i(d) \hat{R}_{i}(d).
\end{equation}
The \ac{DR} estimator starts with this predicted utility and adds an \ac{IPS}-based correction to remove its bias:
\begin{align}
    &\hat{U}_{\text{DR}}(\pi) = {} \label{cltr-obj-dr} \\[-2ex]
    &\qquad \hat{U}_{\text{DM}}(\pi) +  \frac{1}{N} \sum_{i=1}^{N} \sum_{d \in D}  \frac{\omega_{i}(d)}{\rho_{i,0}(d)} \left(c_i(d) - \alpha_{k_i(d)}\hat{R}_{i}(d) -  \beta_{k_i(d)} \right). 
    \nonumber
\end{align}
Thereby, the corrections of the \ac{IPS} part of the \ac{DR} estimator will be smaller if the predicted relevances are more accurate.
\citet{oosterhuis2022doubly} proves that under the assumption of the trust bias model (Assumption~\ref{assumption:trustbias}), the \ac{DR} estimator is unbiased when $\forall (i,d), \rho_{i,0}(d) > 0 \lor \hat{R}_{i}(d) = P(R = 1 \mid d, q_i)$ and has less variance if $0 \leq \hat{R}_{i}(d) \leq 2P(R = 1 \mid d, q_i)$.
They also show that the \ac{DR} estimator needs less data to reach the same level of ranking performance as \ac{IPS}, with especially large improvements when applied to top-$K$ rankings~\citep{oosterhuis2022doubly}.

\subsection{Safety in counterfactual learning to rank}
\Ac{IPS}-based \ac{CLTR} methods, despite their unbiasedness and consistency, suffer from the problem of high-variance~\cite{gupta2024unbiased,oosterhuis2022doubly,joachims2017unbiased}.
 Specifically, if the logged click data is limited, training an \ac{IPS}-based method can lead to an unreliable and unsafe ranking policy~\cite{gupta2023safe}.
 The problem of \textit{safe} policy learning is well-studied in the bandit literature~\citep{thomas2015high, jagerman2020safe, swaminathan2015batch, wu2018variance}. \citet{swaminathan2015batch} proposed the first risk-aware off-policy learning method for bandits, with their risk term quantified as the variance of the \ac{IPS}-estimator. 
\citet{wu2018variance} proposed an alternative method for risk-aware off-policy learning, where the risk is quantified using a Renyi divergence between the action distribution of the new policy and the logging policy~\citep{renyi1961measures}.
Thus, both consider it a risk for the new policy to be too dissimilar to the logging policy, which is presumed safe.
Whilst effective at standard bandit problems, these risk-aware methods are not effective for ranking tasks due to their enormous combinatorial action spaces and correspondingly small propensities.

As a solution for \ac{CLTR}, \citet{gupta2023safe} introduced a risk-aware \ac{CLTR} approach that uses divergence based on the exposure distributions of policies.
They first introduce normalized propensities: $\rho'\!(d) = \rho / Z$, with a normalization factor $Z$ based on $K$:
\begin{equation}
        Z = \!\! \sum_{d \in D}^{}\! \rho(d)  = \!\! \sum_{d \in D} \! \mathbb{E}_{y \sim \pi } \big[ \alpha_{k(d)}  \big]  \! 
         = \mathbb{E}_{y \sim \pi} \left[  \sum_{k=1}^K  \alpha_{k(d)}  \right] \!=  \!\! \sum_{k=1}^K  \alpha_k.
    \label{total-exposure}
\end{equation}
Since $\rho'\!(d) \in [0,1]$ and $\sum_d \rho'\!(d) = 1$, they can be treated as a probability distribution that indicates how exposure is spread over documents.
\citet{gupta2023safe} use Renyi divergence to quantify how dissimilar the new policy is from the logging policy:
\begin{equation}
    d_2(\rho \,\Vert\, \rho_0) = \mathbb{E}_{q} \left[ \sum_{d} \mleft(\frac{\rho'(d)}{\rho_{0}'(d)}\mright)^2 \rho_{0}'(d) \right],
    \label{eq:actionbaseddiv}
\end{equation} 
with the corresponding empirical estimate based on the log data ($\mathcal{D}$) defined as:
 \begin{equation}
    \hat{d}_2(\rho \,\Vert\, \rho_0) = \frac{1}{N} \sum_{i=1}^{N} \sum_{d} \mleft(\frac{\rho'_{i}(d)}{\rho_{i,0}'(d)}\mright)^2 \rho_{i,0}'(d).
    \label{eq:actionbaseddiv}
\end{equation} 
Based on this divergence term, they propose the following risk-aware \ac{CLTR} objective, with parameter $\delta$:
\begin{equation}
    \max_{\pi}  \hat{U}_{\text{IPS}}(\pi) -  \sqrt{ \frac{Z}{N}  \Big(\frac{1-\delta}{\delta}\Big) \hat{d}_2(\rho \,\Vert\, \rho_0)}.
\label{objgenbound}
\end{equation}
Thereby, the existing safe \ac{CLTR} approach penalizes the optimization procedure from learning ranking behavior that is too dissimilar from the logging policy in terms of the distribution of exposure.
The weight of this penalty decreases as the number of datapoints $N$ increases, thus it maintains the same point of convergence as standard \ac{IPS}.
Yet, initially when little data is available and the effect of variance is the greatest, it forces the learned policy to be very similar to the safe logging policy.
\citet{gupta2023safe} prove that their objective bounds the real utility with a probability of $1-\delta$:
\begin{equation}
P\mleft( U(\pi) \geq \hat{U}_{\text{IPS}}(\pi)\! -  \sqrt{ \frac{Z}{N}  \Big(\frac{1-\delta}{\delta}\Big) d_2(\rho \,\Vert\, \rho_{0})\!}\mright) \geq 1 - \delta.
\end{equation}
However, their proof of safety relies on the rank-based position bias model (Assumption~\ref{assumption:positionbias}) and their approach is limited to the basic \ac{IPS} estimator for \ac{CLTR}.

\subsection{Proximal policy optimization}
In the more general \acf{RL} field, \acfi{PPO} was introduced as a method to restrict a new policy $\pi$ from deviating too much from a previously rolled-out policy $\pi_0$~\cite{schulman2015high, schulman2017proximal}.
In contrast with the earlier discussed methods, \ac{PPO} does not make use of a divergence term but uses a simple clipping operation in its optimization objective.
Let $s$ indicate a state, $a$ an action and $R$ a reward function, the \ac{PPO} loss is:
\begin{equation}
    U^{PPO}\!(s, a, \pi, \pi_0) = \mathbb{E} \mleft[ \min \mleft( \frac{\pi(a \,|\, s)}{\pi_0(a \,|\, s)} R(a \,|\, s) , g\big(\epsilon, R(a \,|\, s) \big)  \mright) \mright],
    \end{equation}
where $g$ creates a clipping threshold based on the sign of $R(a \,|\, s)$:
\begin{equation}
    g\mleft(\epsilon, R(a \,|\, s)  \mright) = 
\begin{cases}
    \left(1+\epsilon\right) R(a \,|\, s) & \text{if }R(a \,|\, s)  \geq 0,\\
    \left(1-\epsilon\right) R(a \,|\, s) & \text{otherwise}.
\end{cases}    
\end{equation}
The clipping operation removes incentives for the optimization to let $\pi$ deviate too much from $\pi_0$, since there are no further increases in $U^{PPO}$ when $\pi(a \,|\, s) > (1+ \epsilon)\pi_0(a \,|\, s)$ or  $\pi(a \,|\, s) < (1- \epsilon)\pi_0(a \,|\, s)$, depending on the sign of $R(a \,|\, s)$.
Similar to the previously discussed general methods, \ac{PPO} is not effective when directly applied to the \ac{CLTR} setting due to the combinatorial action space and corresponding extremely small propensities (for most $a$ and $s$: $\pi_0(a \,|\, s) \simeq 0$).


\section{Extending Safety to Advanced CLTR}
\label{sec:method:existingCLTR}
In this section, we introduce our first contribution: our extension of the safe \ac{CLTR} method to address trust bias and \ac{DR} estimation. 

\subsection{Method: Safe doubly-robust CLTR}
For the safe \ac{DR} \ac{CLTR} method, we extend the generalization bound from the existing \ac{IPS} estimator and position bias~\cite[Eq.~26]{gupta2023safe} to the \ac{DR} estimator and trust bias. 
    \label{sec:perfbound}
    \begin{theorem}
    \label{CLTR-bound}
        Given the true utility $U(\pi)$ (Eq.~\ref{true-utility}) and its exposure-based \ac{DR} estimate $\hat{U}_{\text{DR}}(\pi)$ (Eq.~\ref{cltr-obj-dr}) of the ranking policy $\pi$ with the logging policy $\pi_{0}$ and the metric weights $\omega$ and $\omega_{0}$ (Eq.~\ref{eq:omega} and \ref{eq:omega_logging}), assuming the trust bias click model (Assumption~\ref{assumption:trustbias}),
        the following generalization bound holds with probability $1 - \delta$:
    %
    %
    %
     \begin{equation*}
        P\mleft( U(\pi) \! \geq \! \hat{U}_\text{DR}(\pi)
        \! - \!
        \mleft( 1 \! + \! \max_{k} \! \frac{ \beta_k}{\alpha_k} \mright)
        \sqrt{\frac{2 Z}{N} \! \Big( \frac{1-\delta}{\delta} \Big) d_2(\omega \Vert \omega_{0}) }\mright) \! \geq \! 1 \! - \! \delta.
    \end{equation*}
    \end{theorem}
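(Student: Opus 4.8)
The plan is to mirror the structure of \citet{gupta2023safe}'s proof for \ac{IPS} under position bias, replacing each ingredient by its \ac{DR}/trust-bias counterpart. The first step is to note that, under Assumption~\ref{assumption:trustbias} and treating the regression predictions $\hat{R}_i$ as fixed (e.g., fitted on held-out data) with $\rho_{i,0}(d) > 0$ wherever needed, the \ac{DR} estimator is unbiased, $\mathbb{E}[\hat{U}_{\text{DR}}(\pi)] = U(\pi)$, as shown by \citet{oosterhuis2022doubly}; the case where some $\rho_{i,0}(d)=0$ is vacuous because then $d_2(\omega\Vert\omega_0) = \infty$. The second step is to apply Cantelli's inequality (the one-sided Chebyshev bound) to $\hat{U}_{\text{DR}}(\pi)$: for a random variable $X$ with finite variance, $P\big(X \geq \mathbb{E}[X] - t\big) \geq 1 - \mathrm{Var}(X)/(\mathrm{Var}(X) + t^2)$, and choosing $t = \sqrt{\mathrm{Var}(\hat{U}_{\text{DR}}(\pi))\,(1-\delta)/\delta}$ makes the right-hand side exactly $1-\delta$. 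Combined with unbiasedness, this reduces the theorem to the variance bound $\mathrm{Var}(\hat{U}_{\text{DR}}(\pi)) \leq \big(1+\max_k\beta_k/\alpha_k\big)^2\,\frac{2Z}{N}\,d_2(\omega\Vert\omega_0)$.

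The variance bound is where the real work lies. Since $\hat{U}_{\text{DR}}(\pi)$ averages $N$ i.i.d.\ per-interaction terms, $\mathrm{Var}(\hat{U}_{\text{DR}}(\pi)) = \frac1N\mathrm{Var}(\hat{U}^{(1)}_{\text{DR}}(\pi))$. I would then use two structural facts: the direct-method part $\sum_d \omega_i(d)\hat{R}_i(d)$ is deterministic given $q_i$, and the \ac{IPS} correction only receives nonzero contributions from the at most $K$ documents actually displayed in $y_i$ (documents outside the top-$K$ have $\alpha_k = \beta_k = 0$ and no clicks). Conditioning on $q_i$, bounding the conditional variance of the correction by its conditional second moment, expanding the square, and using independence of clicks across documents given $y_i$, the expression splits into a diagonal part and off-diagonal cross terms. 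For the diagonal, I would write $c_i(d) - \alpha_{k_i(d)}\hat{R}_i(d) - \beta_{k_i(d)}$ as the click noise $c_i(d) - \mathbb{E}[c_i(d)\mid y_i]$ plus the regression error $\alpha_{k_i(d)}\big(P(R=1\mid d,q_i) - \hat{R}_i(d)\big)$, so that $(a+b)^2 \leq 2a^2 + 2b^2$ produces the factor $2$; each piece is then bounded by $\alpha_{k_i(d)}P(R=1\mid d,q_i) + \beta_{k_i(d)}$ (the click-noise term directly, the regression term via the standard condition $0\leq\hat{R}_i(d)\leq 2P(R=1\mid d,q_i)$). Taking the expectation over $y_i\sim\pi_0(q_i)$ then replaces $\mathbb{E}_{y_i}[\alpha_{k_i(d)}+\beta_{k_i(d)}]$ with $\omega_{i,0}(d)$, leaving terms of the form $\mathbb{E}_q\big[\sum_d \omega_i(d)^2\,\omega_{i,0}(d)/\rho_{i,0}(d)^2\big]$. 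At this point the trust-bias inequality $\rho_{i,0}(d) \leq \omega_{i,0}(d) = \mathbb{E}_{y\sim\pi_0}\big[\alpha_{k(d)}(1+\beta_{k(d)}/\alpha_{k(d)})\big] \leq \big(1+\max_k\beta_k/\alpha_k\big)\rho_{i,0}(d)$ is applied to turn the two powers of $\rho_{i,0}$ into powers of $\omega_{i,0}$, yielding the factor $\big(1+\max_k\beta_k/\alpha_k\big)^2$ and, after normalization (reconciling $Z=\sum_k\alpha_k$ with $\sum_k(\alpha_k+\beta_k)$), the R\'enyi divergence $d_2(\omega\Vert\omega_0)$. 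The off-diagonal cross terms I would control with the same Cauchy--Schwarz manipulation over the $K$ displayed slots that \citet{gupta2023safe} use, so that they---together with the subtracted squared mean in $\mathrm{Var}(X)=\mathbb{E}[X^2]-(\mathbb{E}[X])^2$ and the between-query variance $\mathrm{Var}_q(U(\pi\mid q))$---are absorbed into the same estimate without inflating the leading term.

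The main obstacle is precisely this variance computation: the generalization bound is only useful if the constants are sharp, so the delicate points are (i) ensuring the $(1+\max_k\beta_k/\alpha_k)$ dependence appears with the correct power and is not compounded when juggling the two different normalizations---$Z=\sum_k\alpha_k$ governing the \ac{IPS} denominators $\rho_{i,0}$ versus $\sum_k(\alpha_k+\beta_k)$ governing the metric weights $\omega$---and (ii) verifying that the off-diagonal and between-query contributions really are dominated by the diagonal estimate rather than contributing a spurious higher-order-in-$Z$ term. By contrast, the concentration step and the appeal to unbiasedness are routine once the variance is under control.
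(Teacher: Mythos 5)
Your overall architecture matches the paper's proof: Cantelli's inequality reduces the claim to the variance bound $\mathrm{Var}[\hat{U}_{\text{DR}}(\pi)] \leq \frac{2Z}{N}(1+\max_k \beta_k/\alpha_k)^2 d_2(\omega\Vert\omega_0)$; the variance is localized in the \ac{IPS}-correction term; per-document terms are treated separately (the paper simply assumes independent document examinations rather than invoking Cauchy--Schwarz on cross terms); and the two powers of $\rho_0$ are converted into powers of $\omega_0$ via $\omega_0(d)/\rho_0(d) \leq 1+\max_k\beta_k/\alpha_k$ before normalizing to the R\'enyi divergence. The place where your route genuinely diverges --- and where it fails to recover the stated constant --- is the per-document residual variance. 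You decompose $c(d)-\alpha_{k(d)}\hat{R}(d)-\beta_{k(d)}$ into click noise plus regression error and apply $(a+b)^2\leq 2a^2+2b^2$; since each piece is then bounded by (roughly) $\alpha_{k(d)}P(R{=}1\mid d)+\beta_{k(d)}$, this yields a per-document bound of $4\,\omega_0(d)$ in the worst case, not the $2\,\omega_0(d)$ the theorem requires. Your proof therefore establishes the bound with $4Z$ in place of $2Z$, a weaker statement than the one claimed.

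The missing ingredient is the paper's Lemma~\ref{cov-lemma}: under Assumption~\ref{assumption:trustbias}, $\mathrm{Cov}_{y,c}[\,c(d)-\beta_{k(d)},\ \alpha_{k(d)}\hat{R}_d\,] = \hat{R}_d R_d\,\mathrm{Var}_y[\alpha_{k(d)}]\geq 0$. This lets the paper write
\begin{equation*}
\mathrm{Var}\bigl[\,c(d)-\beta_{k(d)}-\alpha_{k(d)}\hat{R}(d)\,\bigr] \leq \mathrm{Var}\bigl[\alpha_{k(d)}\hat{R}(d)\bigr] + \mathrm{Var}\bigl[c(d)-\beta_{k(d)}\bigr]
\end{equation*}
with no doubling, after which each summand is bounded by elementary facts ($\hat{R}(d)^2\leq 1$, $c(d)^2=c(d)$, $\alpha_k^2\leq\alpha_k$, $\beta_k^2\leq\beta_k$) to give exactly $2\omega_0(d)$ in total. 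Note also that this route needs only $\hat{R}(d)\in[0,1]$, whereas your bound on the regression-error piece invokes $0\leq\hat{R}(d)\leq 2P(R{=}1\mid d,q)$ --- a condition that appears in the background discussion of \ac{DR} variance reduction but is \emph{not} a hypothesis of Theorem~\ref{CLTR-bound}, so your proof would also be proving the theorem under an added assumption. The Cantelli step, the unbiasedness appeal, and the final normalization are all as in the paper.
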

    \begin{proof}
        For a proof, we refer to the appendix (Theorem~\ref{sec:perfbound-proof}).
    \end{proof}
    
\noindent%
Given the novel generalization bound from Theorem~\ref{CLTR-bound}, we define the safe \ac{DR} \ac{CLTR} objective as follows:
\begin{equation}
    \max_{\pi}  \hat{U}_{\text{DR}}(\pi) - \mleft( 1 + \max_{k} \frac{ \beta_k}{\alpha_k} \mright) \sqrt{ \frac{2 Z}{N}  \Big(\frac{1-\delta}{\delta}\Big) \hat{d}_2(\omega \,\Vert\, \omega_{0})},
\label{dr-objgenbound}
\end{equation}
where $\hat{d}_2(\omega \,\Vert\, \omega_0)$ is defined analogously to Eq.~\ref{eq:actionbaseddiv}.
The objective optimizes the lower-bound on the true utility function, through a linear combination of the empirical \ac{DR} estimator ($\hat{U}_{\text{DR}}(\pi)$) and the empirical risk regularization term ($\hat{d}_2(\omega \,\Vert\, \omega_0)$). 
In a setting where click data is limited, our safe \ac{DR} objective will weight the risk regularization term higher, and as a result, the objective ensures that the new policy stays close to the safe logging policy. 
When a sufficiently high volume of click data is collected, and thus we have higher confidence in the \ac{DR} estimate, the objective falls back to its \ac{DR} objective counterpart.

For the choice of the ranking policy ($\pi$), we propose to optimize a stochastic ranking policy $\pi$ with a gradient descent-based method. 
For the gradient calculation, we refer to previous work~\cite{yadav2021policy,gupta2023safe,oosterhuis2022doubly}.

\header{Conditions for safe \ac{DR} \ac{CLTR}}
Finally, we note that besides the explicit assumption that user behavior follows the trust bias model (Assumption~\ref{assumption:trustbias}), there is also an important implicit assumption in this approach.
Namely, the approach assumes that the bias parameters (i.e., $\alpha$ and $\beta$) are known, a common assumption in the \ac{CLTR} literature~\citep{oosterhuis2022doubly, oosterhuis2020learning}.
However, in practice, either of these assumptions could not hold, i.e., user behavior could not follow the trust bias model, or a model's bias parameters could be wrongly estimated.
Additionally, in adversarial settings where clicks are intentionally misleading or incorrectly logged~\cite{radlinski2007addressing,liu2023black,castillo2011adversarial}, the user behavior assumptions do not hold, and, the generalization bound of our \ac{DR} \ac{CLTR} is not guaranteed to hold.
Thus, whilst it is an important advancement over the existing safe \ac{CLTR} method~\cite{gupta2023safe}, our approach is limited to only providing a \emph{conditional} form of safety.

\section{Method: Proximal Ranking Policy Optimization (PRPO)}

Inspired by the limitations of the method introduced in Section~\ref{sec:method:existingCLTR} and the \ac{PPO} method from the \ac{RL} field (Section~\ref{sec:relatedwork}), we propose the first \emph{unconditionally} safe \ac{CLTR} method: \acfi{PRPO}.
Our novel \ac{PRPO} method is designed for practical safety by making \emph{no assumptions} about user behavior.
Thereby, it provides the most robust safety guarantees for \ac{CLTR} yet.

For safety, instead of relying on a high-confidence bound (e.g., Eq.~\ref{objgenbound} and~\ref{dr-objgenbound}), \ac{PRPO} guarantees safety by removing the incentive for the new policy to rank documents too much higher than the safe logging policy. 
This is achieved by directly clipping the ratio of the metric weights for a given query $q_i$ under the new policy $\omega_i(d)$, and the logging policy ($\omega_{i,0}(d)$), i.e., $\frac{\omega_{i}(d)}{\omega_{i,0}(d)}$ to be bounded in a fixed predefined range: $\mleft[\epsilon_{-}, \epsilon_{+}\mright]$. 
As a result, the \ac{PRPO} objective provides no incentive for the new policy to produce weights $\omega_i(d)$ outside of the range: $\epsilon_{-} \cdot \omega_{i,0}(d) \leq \omega_i(d) \leq \epsilon_{+} \cdot \omega_{i,0}(d)$.

Before defining the \ac{PRPO} objective, we first introduce a term $r(d|q)$ that represents an unbiased \ac{DR} relevance estimate, weighted by $\omega_{0}$, for a single document-query pair (cf.\ Eq.~\ref{cltr-obj-dr}):
\begin{equation}
\begin{split}
        \mbox{}\hspace{-0.2cm}&r(d | q)  = {}\\[-1ex]
        \mbox{}\hspace{-0.2cm}&\omega_{0}(d | q)\hat{R}(d | q) +   \frac{\omega_{0}(d | q)}{\rho_{0}(d | q)} \! \sum_{i \in \mathcal{D} : q_i = q \hspace{-0.7cm}}  \mleft(c_i(d)  - \alpha_{k_i(d)}\hat{R}(d | q)   -  \beta_{k_i(d)}\mright).\hspace{-0.1cm}\mbox{}
\end{split}        
\end{equation}
For the sake of brevity, we drop $\pi$ and $\pi_{0}$ from the notation when their corresponding value is clear from the context.
This enables us to reformulate the \ac{DR} estimator around the ratios between the metric weights $\omega$ and $\omega_0$ (cf.\ Eq.~\ref{cltr-obj-dr}):
\begin{equation}
    \hat{U}_{\text{DR}}(\pi) =  \sum_{q,d \in \mathcal{D}}  \frac{\omega(d\mid q)}{\omega_{0}(d \mid q)} r(d \mid q).
    \label{eq:dr_reformulate}
\end{equation}
Before defining the proposed \ac{PRPO} objective, we first define the following clipping function:
\begin{equation}
    f(x,\epsilon_{-}, \epsilon_{+}, r) = 
\begin{cases}
    \min(x, \epsilon_{+}) \cdot r  & r \geq 0,\\
    \max(x, \epsilon_{-}) \cdot r & \text{otherwise}.
\end{cases}  
\label{eq:clipping_function}  
\end{equation}
Given the reformulated \ac{DR} estimator (Eq.~\ref{eq:dr_reformulate}), and the clipping function (Eq.~\ref{eq:clipping_function}), the \ac{PRPO} objective can be defined as follows:
\begin{equation}
    \hat{U}_{\text{PRPO}}(\pi) =  \sum_{q,d \in \mathcal{D}}  f\mleft(\frac{\omega(d \mid q)}{\omega_{0}(d \mid q)}, \epsilon_{-}, \epsilon_{+}, r(d \mid q)\mright).
    \label{eq:prpo_obj}
\end{equation}
Figure~\ref{fig:prpo} visualizes the effect the clipping of \ac{PRPO} has on the optimization incentives.
We see how the clipped and unclipped weight ratios progress as documents are placed on different ranks.
The unclipped weights keep increasing as documents are moved to the top of the ranking, when $r>1$, or to the bottom, when $r<1$.
Consequently, optimization with unclipped weight ratios aims to place these documents at the absolute top or bottom positions.
Conversely, the clipped weights do not increase beyond their clipping threshold, which for most document is reached before being placed at the very top or bottom position.
As a result, optimization with clipped weight ratios will not push these documents beyond these points in the ranking.
For example, when $r>0$, we see that there is no incentive to place a document at higher than rank 6, if it was placed at rank 8 by the logging policy.
Similarly, placement higher than rank 4 leads to no gain if the original rank was 6, and higher than rank 3 leads to no improvement gain from an original rank of 4.
Vice versa, when $r<0$, each document has a rank, where placing it lower than that rank brings no increase in clipped weight ratio.
Importantly, this behavior only depends on the metric and the logging policy; \ac{PRPO} makes \emph{no further assumptions}.

\begin{figure}[!t]
    \begin{center}
    \includegraphics[scale=0.48]{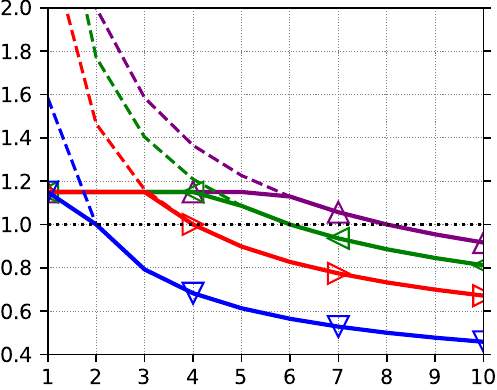}
    \includegraphics[scale=0.48]{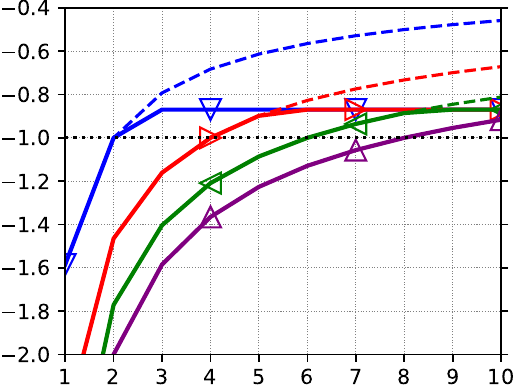}\\
    \includegraphics[width=0.99\columnwidth]{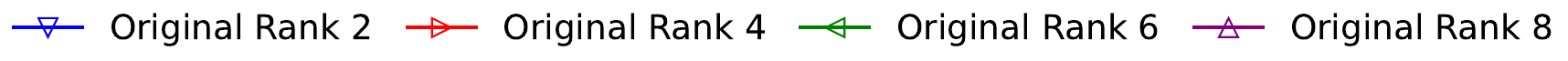}
    \end{center}
        \caption{
        Weight ratios in the clipped \ac{PRPO} objective (solid lines) and the unclipped counterparts (dashed lines), as documents are moved from four different original ranks.
        Left: positive relevance, $r=1$; right: negative relevance, $r=-1$;
        x-axis: new rank for document;
        y-axis: unclipped weight ratios (dashed lines), $r\cdot\omega_i(d)/\omega_{i,0}(d)$;
        and 
        clipped \ac{PRPO} weight ratios (solid lines),
        $f\mleft(\omega_i(d)/\omega_{i,0}(d), \epsilon_{-} = 1.15^{-1}, \epsilon_{+}= 1.15, r=\pm1\mright)$.
        DCG metric weights used: $\omega_i(d) = \log_2(\textmd{rank}(d \mid q_i, \pi) + 1)^{-1}$.
        }
        \label{fig:prpo}
\end{figure}

Whilst the clipping of \ac{PRPO} is intuitive, we can prove that it provides the following formal form of unconditional safety:
\begin{theorem}
\label{PRPO-proof}
Let $q$ be a query, $\omega$ be metric weights, $y_0$ be a logging policy ranking, and $y^*(\epsilon_{-},\epsilon_{+})$ be the ranking that optimizes the \ac{PRPO} objective in Eq.~\ref{eq:prpo_obj}.
Assume that $\forall d, \in \mathcal{D}, r(d \mid q) \not= 0$.
Then, 
for any $\Delta \in \mathbb{R}_{\geq0}$, there exist values for $\epsilon_{-}$ and $\epsilon_{+}$ that guarantee that the difference between the utility of $y_0$ and $y^*(\epsilon_{-},\epsilon_{+})$ is bounded by $\Delta$:
\begin{equation}
\forall \Delta \! \in \mathbb{R}_{\geq0}, \exists \epsilon_{-} \!\!\in \mathbb{R}_{\geq0}, \epsilon_{+}\!\! \in \mathbb{R}_{\geq0};  \;| U(y_0) -  U(y^*(\epsilon_{-}, \epsilon_{+}))  | \leq \Delta.
\label{eq:prpotheorem}
\end{equation}
\end{theorem}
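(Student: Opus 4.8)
The plan is to exploit the monotone-in-$\epsilon$ structure of the clipping function. First I would observe that the utility $U$ depends on a ranking only through the metric weights $\omega_i(d)$, and that $U$ is a linear function of the vector $(\omega_i(d))_{d}$ with coefficients $P(R=1\mid d,q)\geq 0$ (cf.\ Eq.~\ref{true-utility}). Hence to bound $|U(y_0)-U(y^*(\epsilon_{-},\epsilon_{+}))|$ it suffices to bound, for every document $d$, the difference $|\omega_i(d)-\omega_{i,0}(d)|$ between the weight assigned by the optimizer $y^*$ and the weight assigned by the logging policy $y_0$, and then sum these differences against the relevances. So the key quantity to control is how far $\omega_i(d)/\omega_{i,0}(d)$ can stray from $1$ at the optimum.

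The second step is the core argument: the \ac{PRPO} objective (Eq.~\ref{eq:prpo_obj}) gives the optimizer no incentive to push any ratio $\omega_i(d)/\omega_{i,0}(d)$ outside the interval $[\epsilon_{-},\epsilon_{+}]$. Concretely, by the definition of $f$ in Eq.~\ref{eq:clipping_function}, when $r(d\mid q)\geq 0$ increasing the ratio beyond $\epsilon_{+}$ does not increase the summand $f(\,\cdot\,)$, and when $r(d\mid q)<0$ decreasing the ratio below $\epsilon_{-}$ does not increase it either. Combined with the nonzero assumption $r(d\mid q)\neq 0$, this means there is always an optimal ranking $y^*$ in which every clipped ratio lies in $[\epsilon_{-},\epsilon_{+}]$ — or, more carefully, in which every document's weight ratio is either in $[\epsilon_{-},\epsilon_{+}]$ or is "pinned" by the ranking constraints in a direction that the clipping has made neutral. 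I would therefore argue that $\epsilon_{-}\,\omega_{i,0}(d) \leq \omega_i^*(d) \leq \epsilon_{+}\,\omega_{i,0}(d)$ for all $d$ at the optimum (taking the optimizer that respects this, ties broken in its favor), which immediately gives $|\omega_i^*(d)-\omega_{i,0}(d)| \leq \max(\epsilon_{+}-1,\,1-\epsilon_{-})\,\omega_{i,0}(d)$.

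The third step is to close the loop quantitatively. Plugging the per-document bound back into the linear form of $U$ yields
\begin{equation*}
|U(y_0)-U(y^*(\epsilon_{-},\epsilon_{+}))| \;\leq\; \max(\epsilon_{+}-1,\,1-\epsilon_{-})\sum_{q}P(q\mid Q)\sum_{d\in D}\omega_{i,0}(d)\,P(R=1\mid d),
\end{equation*}
and the double sum on the right is just $U(y_0)$, a fixed finite constant independent of $\epsilon_{-},\epsilon_{+}$. So $|U(y_0)-U(y^*)| \leq \max(\epsilon_{+}-1,\,1-\epsilon_{-})\cdot U(y_0)$. Given any target $\Delta\in\mathbb{R}_{\geq 0}$, choosing $\epsilon_{+}$ and $\epsilon_{-}$ close enough to $1$ — explicitly $\epsilon_{+} = 1 + \Delta/\!\max(U(y_0),1)$ and $\epsilon_{-} = 1 - \Delta/\!\max(U(y_0),1)$, clipped to stay nonnegative — makes the right-hand side at most $\Delta$, which proves Eq.~\ref{eq:prpotheorem}. (The edge case $\Delta=0$ forces $\epsilon_{-}=\epsilon_{+}=1$, i.e.\ the objective only rewards exactly reproducing $y_0$'s weights, so $y^*=y_0$ up to utility-irrelevant ties.)

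The main obstacle I anticipate is making the claim "the optimizer has no incentive to leave $[\epsilon_{-},\epsilon_{+}]$" fully rigorous despite the combinatorial coupling of the ranking: the weights $\omega_i(d)$ for different $d$ cannot be set independently since they all derive from a single permutation (or a distribution over permutations if $\pi$ is stochastic). I would handle this by arguing at the level of a single ranking $y$ and showing that any ranking achieving a ratio outside the interval can be modified — by a local swap moving the offending document toward its logging-policy position — without decreasing the \ac{PRPO} objective, because the clipped contribution of that document is flat in the relevant direction while the contributions of documents it passes can only improve or be handled by iterating; the stochastic-policy case then follows by applying the argument to the ranking in the support that maximizes the per-realization objective. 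Care is also needed that $r(d\mid q)$ in Eq.~\ref{eq:prpo_obj} does not itself depend on the new policy (it does not — it is a function of $\pi_0$, $\hat R$, and the logged clicks only), which is exactly why the reformulation in Eq.~\ref{eq:dr_reformulate} is the right vehicle and why the $r(d\mid q)\neq 0$ hypothesis suffices to rule out indifference.
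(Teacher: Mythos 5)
Your strategy is genuinely different from --- and much more ambitious than --- the paper's. The paper proves the theorem by the degenerate instantiation $\epsilon_{-}=\epsilon_{+}=1$: there, any deviation from $y_0$ can only decrease each clipped summand, so $y^{*}=y_0$ and the utility difference is $0\leq\Delta$ for every $\Delta$. You instead attempt a quantitative bound that degrades gracefully as the clipping range widens. Unfortunately, the load-bearing step of your argument --- that at the optimum every document satisfies $\epsilon_{-}\,\omega_{i,0}(d)\leq\omega_i^{*}(d)\leq\epsilon_{+}\,\omega_{i,0}(d)$ --- is false, for two independent reasons.

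First, the clipping in Eq.~\ref{eq:clipping_function} caps the \emph{gain} a document can contribute in its preferred direction, but it does not cap the \emph{loss} it suffers in the opposite direction: for $r(d\mid q)>0$ the summand is $\min(x,\epsilon_{+})\,r$, which keeps decreasing with no floor as $x$ falls below $\epsilon_{-}$. Because the weights are coupled through a single permutation, the optimizer will gladly push a low-$r$ document far outside the range if that frees a high rank for a document with much larger $r$. Concretely, with DCG weights, $r(d_3)=100$ at logging rank $3$, $r(d_2)=1$ at rank $2$, and $\epsilon_{\pm}=1\pm0.1$: promoting $d_3$ to rank $2$ gains $(\epsilon_{+}-1)\cdot 100=10$, while demoting $d_2$ to rank $3$ costs only $(1-0.79)\cdot 1\approx 0.21$, so the optimum leaves $d_2$ at ratio $0.79<\epsilon_{-}$. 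Second, even documents moving in their preferred direction overshoot, because ranks are discrete: the first rank at which a document's clipped contribution saturates generally has an \emph{actual} ratio strictly above $\epsilon_{+}$ (in Figure~\ref{fig:prpo}, a document promoted from rank $8$ saturates at a rank whose true ratio is about $1.23$, not $1.15$), and it is the actual weights, not the clipped ones, that enter $U$. Both effects invalidate the per-document inequality you feed into your third step, so the bound $|U(y_0)-U(y^{*})|\leq\max(\epsilon_{+}-1,1-\epsilon_{-})\,U(y_0)$ does not follow; your local-swap repair does not close the hole, since moving an offending document back toward its logging rank strictly decreases the contributions of the unclipped documents it passes. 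The theorem can still be rescued in your spirit --- for $\epsilon_{\pm}$ close enough to $1$ the capped gain $(\epsilon_{+}-1)|r(d)|$ of any promotion eventually falls below the uncapped loss $\min_{d}|r(d)|$ times the smallest one-rank weight gap, forcing $y^{*}=y_0$ --- but that argument collapses back to the paper's conclusion of zero difference rather than yielding a nontrivial $\Delta$-dependent clipping range.
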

\begin{proof}
A proof is given in Appendix~\ref{sec:prpo-proof}.
\end{proof}
\vspace*{-3.5mm}
\header{Adaptive clipping}
Theorem~\ref{PRPO-proof} describes a very robust sense of safety, as it shows \ac{PRPO} can be used to prevent any given decrease in performance without assumptions.
However, it also reveals that this safety comes at a cost; \ac{PRPO} prevents both decreases and increases of performance.
This is very common in safety approaches, as there is a generally a tradeoff between risks and rewards~\citep{gupta2023safe}.
Existing safety methods, such as the safe \ac{CLTR} approach of Section~\ref{sec:method:existingCLTR}, generally, loosen their safety measures as more data becomes available, and the risk is expected to have decreased~\citep{thomas2015high}.

We propose a similar strategy for \ac{PRPO} through adaptive clipping, where the effect of clipping decreases as the number of datapoints $N$ increases.
Specifically, we suggest using a monotonically decreasing $\delta(N)$ function such that $\lim_{N \rightarrow \infty} \delta(N) = 0$.
The $\epsilon$ parameters can then be obtained through the following transformation: $\epsilon_{-} = \delta(N)$ and $\epsilon_{+} = \frac{1}{\delta(N)}$.
%
%
This leads to a clipping range of $[\delta(N), \frac{1}{\delta(N)}]$, and in the limit: $\lim_{N \rightarrow \infty}$, it becomes: $[0,\infty]$.
In other words, as more data is gathered, the effect of \ac{PRPO} clipping eventually disappears, and the original objective is recovered.
The exact choice of $\delta(N)$ determines how quickly this happens.

\header{Gradient ascent with PRPO and possible extensions}
Finally, we consider how the \ac{PRPO} objective should be optimized. This turns out to be very straightforward when we look at its gradient.
The clipping function $f$ (Eq.~\ref{eq:clipping_function}) has a simpler gradient involving an indicator function on whether $x$ is inside the bounded range:
\begin{equation}
\nabla_{x} f(x, \epsilon_{-}, \epsilon_{+}, r)
=  \mathds{1}\big[
(r > 0 \land x \leq \epsilon_{+})
\lor
(r < 0 \land x \geq \epsilon_{-})
\big] r.
\end{equation}
Applying the chain rule to the \ac{PRPO} objective (Eq.~\ref{eq:prpo_obj}) reveals:
\begin{equation*}
\nabla_{\!\pi} \hat{U}_{\text{PRPO}}(\pi) \!= 
	\!\!\!\!\!\sum_{q,d \in \mathcal{D}}
	\underbrace{\!\!\!\!
	\Big[\nabla_{\!\pi} \frac{\omega(d | q)}{\omega_{0}(d | q)} \Big]
	}_\text{\hspace{-1cm}grad. for single doc.\hspace{-1cm}}
	\underbrace{\!\!
	\nabla_{\!\pi} f\bigg( \! \frac{\omega(d | q)}{\omega_{0}(d | q)}, \epsilon_{-}, \epsilon_{+}, r(d | q) \! \bigg)
	}_\text{\vphantom{g}indicator reward function}
	.
\end{equation*}
Thus, we see that the gradient of \ac{PRPO} simply takes the importance weighted metric gradient per document, and multiplies it with the indicator function and reward.
As a result, \ac{PRPO} is simple to combine with existing \ac{LTR} algorithms, especially, \ac{LTR} methods that use policy-gradients~\cite{williams1992simple}, such as PL-Rank~\citep{oosterhuis2021computationally, oosterhuis2022learning} or StochasticRank~\citep{ustimenko2020stochasticrank}.
For methods in the family of LambdaRank~\citep{wang2018lambdaloss, burges2010ranknet, burges2006learning}, it is a matter of replacing the $|\Delta DCG|$ term with an equivalent for the PRPO bounded metric.

Lastly, we note that whilst we introduced \ac{PRPO} for \ac{DR} estimation, it can be extended to virtually any relevance estimation by choosing a different $r$;
e.g., one can easily adapt it for \ac{IPS}~\citep{oosterhuis2021unifying,joachims2017unbiased}, or relevance estimates from a click model~\citep{chuklin-click-2015}, etc.
In this sense, we argue \ac{PRPO} can be seen as a framework for robust safety in \ac{LTR}.

{\renewcommand{\arraystretch}{0.01}
\setlength{\tabcolsep}{0.04cm}
\begin{figure*}[ht!]
\vspace{-\baselineskip}
\centering
\begin{tabular}{c r r r }
&
 \multicolumn{1}{c}{ \small \hspace{0.5cm} Yahoo! Webscope}
&
 \multicolumn{1}{c}{ \small \hspace{0.5cm} MSLR-WEB30k}
&
 \multicolumn{1}{c}{ \small \hspace{0.5cm} Istella}
\\
\rotatebox[origin=lt]{90}{\hspace{1.4cm}\small  NDCG@5} &
\includegraphics[scale=0.465]{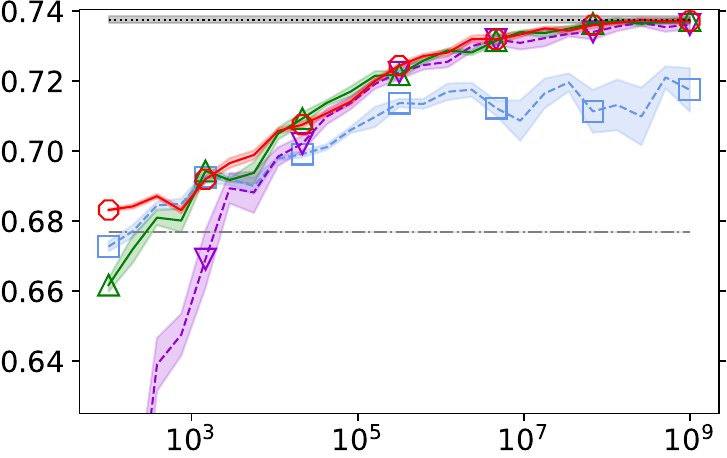} &
\includegraphics[scale=0.465]{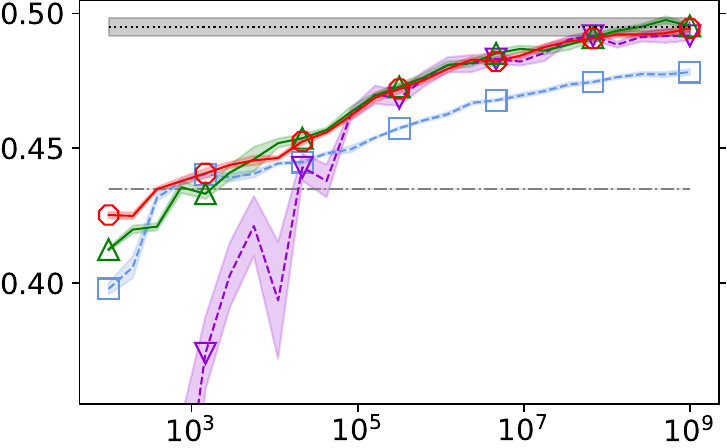} &
\includegraphics[scale=0.465]{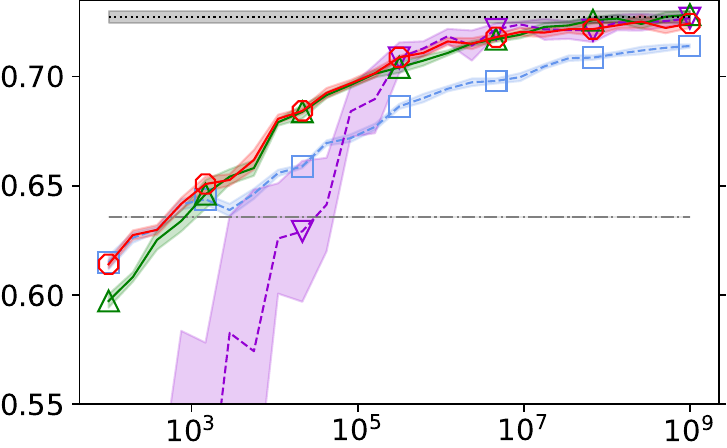}
\\
& \multicolumn{1}{c}{\small \hspace{1.75em} Number of interactions simulated ($N$)}
& \multicolumn{1}{c}{\small \hspace{1.75em} Number of interactions simulated ($N$)}
& \multicolumn{1}{c}{\small \hspace{1.75em} Number of interactions simulated ($N$)}
\\[2mm]
    \multicolumn{4}{c}{
    \includegraphics[scale=0.44]{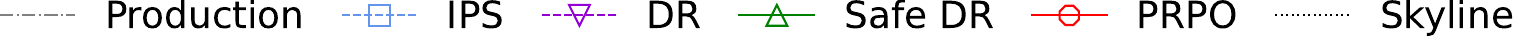}
}
\end{tabular}
\caption{
Performance in terms of NDCG@5 of the \ac{IPS}, \ac{DR} and proposed safe \ac{DR} ($\delta=0.95$) and \ac{PRPO} ($\delta(N)=\frac{100}{N}$) methods for \ac{CLTR}.
The results are presented varying size of training data ($N$), with number of simulated queries varying from $10^2$ to $10^9$.
Results are averaged over 10 runs; the shaded areas indicate 80\% prediction intervals. 
}
\label{fig:mainresults}
\end{figure*}
}

\setlength{\tabcolsep}{0.15em}
{\renewcommand{\arraystretch}{0.60}
\begin{figure*}[ht!]
\centering
\vspace{-0.5\baselineskip}
\begin{tabular}{c r r r r}
&
 \multicolumn{1}{c}{  Yahoo! Webscope}
&
 \multicolumn{1}{c}{  MSLR-WEB30k}
&
 \multicolumn{1}{c}{  Istella}
 &

\\
\rotatebox[origin=lt]{90}{\hspace{0.65cm} \small NDCG@5} &
\includegraphics[scale=0.465]{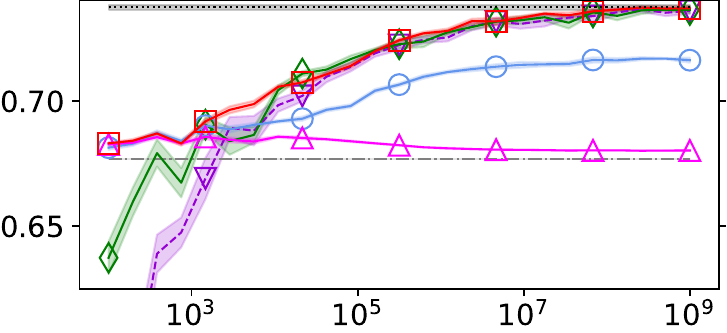} &
\includegraphics[scale=0.465]{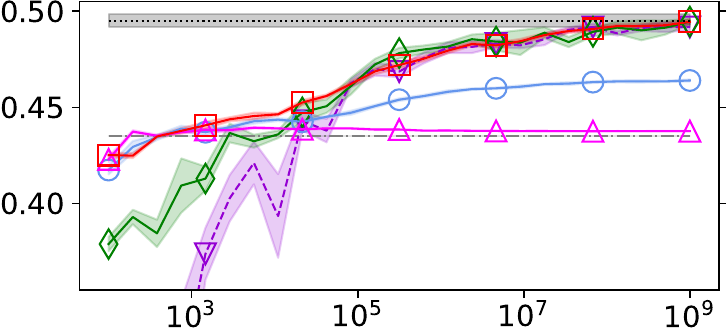} &
\includegraphics[scale=0.465]{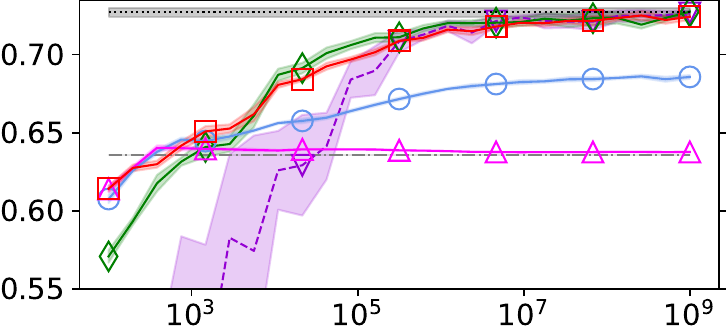} 
\\
\multicolumn{4}{c}{
\includegraphics[scale=.4]{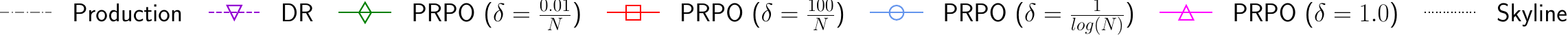}
}
\\
\rotatebox[origin=lt]{90}{\hspace{0.65cm} \small NDCG@5} &
\includegraphics[scale=0.465]{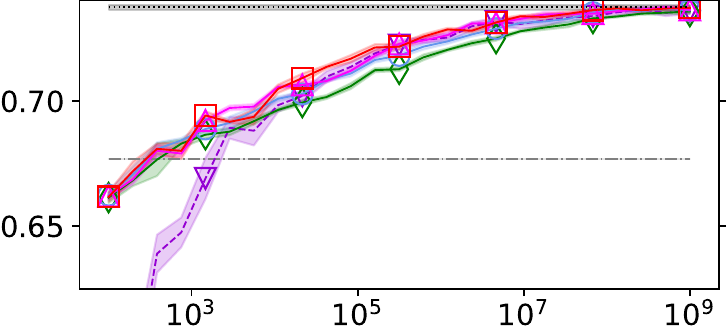} &
\includegraphics[scale=0.465]{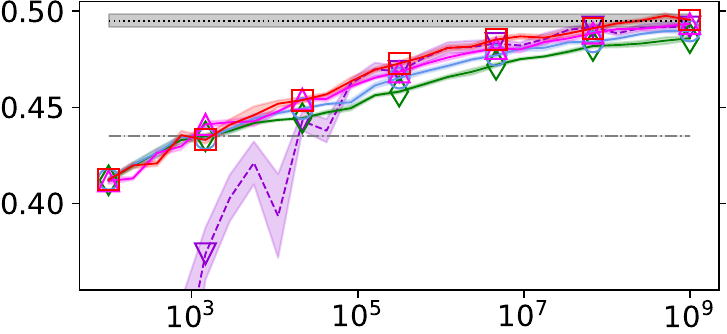} &
\includegraphics[scale=0.465]{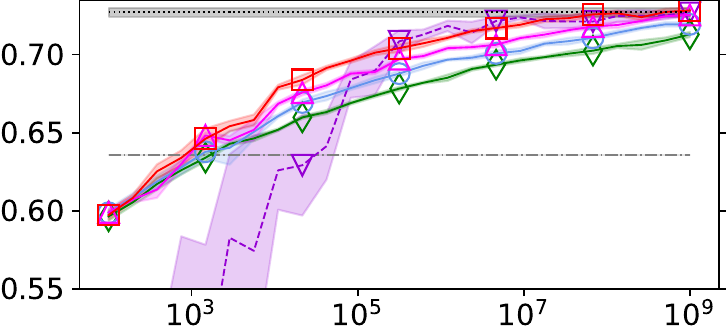}
\\
& \multicolumn{1}{c}{\small \hspace{1.75em} Number of interactions simulated ($N$)}
& \multicolumn{1}{c}{\small \hspace{1.75em} Number of interactions simulated ($N$)}
& \multicolumn{1}{c}{\small \hspace{1.75em} Number of interactions simulated ($N$)}
\\[2mm]
\multicolumn{4}{c}{
\includegraphics[scale=.4]{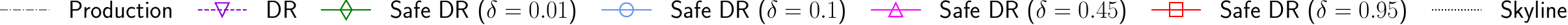}
}
\end{tabular}
\caption{
    Performance of the safe \ac{DR} and \ac{PRPO} with varying safety parameter ($\delta$). 
    Top row: sensitivity analysis of \ac{PRPO} with varying clipping parameter ($\delta$) over varying dataset sizes $N$. 
    Bottom row: sensitivity analysis for the safe \ac{DR} method with varying safety confidence parameter ($\delta$). Results are averaged over 10 runs; shaded areas indicate $80\%$ prediction intervals.
}
\label{fig:ablationresults}
\end{figure*}
}



\section{Experimental Setup}

For our experiments, we follow the semi-synthetic experimental setup that is prevalent in the \ac{CLTR} literature~\citep{oosterhuis2022doubly,oosterhuis2021robust,vardasbi2020inverse,gupta2023safe}.
We make use of the three largest publicly available \ac{LTR} datasets: Yahoo!\ Webscope~\cite{chapelle2011yahoo}, MSLR-WEB30k~\citep{qin2013introducing}, and Istella~\citep{dato2016fast}.
The datasets consist of queries, a preselected list of documents per query, query-document feature vectors, and manually-graded relevance judgments for each query-document pair.

Following previous work~\cite{oosterhuis2022doubly,vardasbi2020inverse,gupta2023safe}, we train a production ranker on a $3\%$ fraction of the training queries and their corresponding relevance judgments.
The goal is to simulate a real-world setting where a ranker trained on manual judgments is deployed in production and is used to collect click logs.
The collected click logs can then be used for \ac{LTR}. 
We assume the production ranker is safe, given that it would serve live traffic in a real-world setup. 

We simulate a top-$K$ ranking setup~\cite{oosterhuis2020policy} where only $K=5$ documents are displayed to the user for a given query, and any document beyond that gets zero exposure.
To get the relevance probability, we apply the following transformation: $P(R=1 \mid q, d) = 0.25 * rel(q,d)$, where $rel(q,d) \in \{0,1,2,3,4\}$ is the relevance judgment for the given query-document pair.
We generate clicks based on the trust bias click model (Assumption~\ref{assumption:trustbias}):
\begin{equation}
    P(C=1 \mid q, d, k) = \alpha_k P(R=1 \mid q, d) + \beta_k.   
\label{click_simulation}
\end{equation}
The trust bias parameters are set based on the empirical observation by \citet{agarwal2019addressing}: $\alpha = [0.35, 0.53, 0.55, 0.54, 0.52]$, and $\beta = [0.65, 0.26, 0.15, 0.11, 0.08]$.
For \ac{CLTR} training, we only use the training and validation clicks generated via the click simulation process (Eq.~\ref{click_simulation}).
Further, to test the robustness of the safe \ac{CLTR} methods in a setting where the click model assumptions do not hold,
we simulate an \emph{adversarial click model}. In the adversarial click model, the user clicks on the irrelevant document with a high probability and on a relevant document with a low click probability.
We define the adversarial click model mathematically as follows:
\begin{equation}
    P(C=1 \mid q, d, k) = 1 - \mleft( \alpha_k P(R=1 \mid q, d) + \beta_k \mright).   
\label{click_simulation_adv}
\end{equation}
Thereby, we simulate a maximally \emph{adversarial} user who clicks on documents with a click probability that is inversely correlated with the assumed trust bias model (Assumption~\ref{assumption:trustbias}).

Further, we assume that the logging propensities have to be estimated.
For the logging propensities $\rho_0$, and the logging metric weights ($\omega_0$), we use a simple Monte-Carlo estimate~\citep{gupta2023safe}:
\begin{equation}
    \hat{\rho}_0(d ) = \frac{1}{N} \sum^N_{i=1: y_i \sim \pi_{0}\hspace{-1cm}}   \alpha_{k_{i}(d)},
    \quad
    \hat{\omega}_0(d ) = \frac{1}{N} \sum^N_{i=1: y_i \sim \pi_{0}\hspace{-0.8cm}} \mleft( \alpha_{k_{i}(d)} + \beta_{k_{i}(d)} \mright).
    \label{prop-estimate}  
\end{equation}
%
For the learned policies ($\pi$), we optimize \ac{PL} ranking models~\citep{oosterhuis2021computationally} using the REINFORCE policy-gradient method~\cite{yadav2021policy,gupta2023safe}.
We perform clipping on the logging propensities (Eq.~\ref{policy-aware-exposure}) only for the training clicks and not for the validation set. 
Following previous work, we set the clipping parameter to $10 / \sqrt{N}$~\cite{gupta2023safe,oosterhuis2021unifying}. 
We do not apply the clipping operation for the logging metric weights (Eq.~\ref{eq:omega_logging}).
To prevent overfitting, we apply early stopping based on the validation clicks.
For variance reduction, we follow~\cite{yadav2021policy,gupta2023safe} and use the average reward per query as a control-variate.

As our evaluation metric, we compute the NDCG@5 metric using the relevance judgments on the test split of each dataset~\citep{jarvelin2002cumulated}. 
Finally, the following methods are included in our comparisons:
\begin{enumerate}[leftmargin=*]
    \item  \emph{IPS}. The \ac{IPS} estimator with affine correction~\cite{vardasbi2020inverse,oosterhuis2021unifying} for \ac{CLTR} with trust bias (Eq.~\ref{cltr-obj-ips-positionbias}).
    \item  \emph{Doubly Robust}. The \ac{DR} estimator for \ac{CLTR} with trust bias (Eq.~\ref{cltr-obj-dr}). 
    This is the most important baseline for this work, given that the \ac{DR} estimator is the state-of-the-art \ac{CLTR} method~\cite{oosterhuis2022doubly}.
    \item  \emph{Safe \ac{DR}}. Our proposed safe \ac{DR} \ac{CLTR} method (Eq.~\ref{dr-objgenbound}), which relies on the trust bias assumption (Assumption~\ref{assumption:trustbias}).
     \item  \emph{\ac{PRPO}}. Our proposed \acfi{PRPO} method for safe \ac{DR} \ac{CLTR} (Eq.~\ref{eq:prpo_obj}).
     \item  \emph{Skyline.} \ac{LTR} method trained on the true relevance labels. Given that it is trained on the real relevance signal, the skyline performance is the upper bound on any \ac{CLTR} methods performance. 
\end{enumerate}


\section{Results and Discussion}
\header{Comparision with baseline methods}
Figure~\ref{fig:mainresults} presents the main results with different \ac{CLTR} estimators with varying amounts of simulated click data.
Amongst the baselines, we see that the \ac{DR} estimator converges to the skyline much faster than the \ac{IPS} estimator. 
The \ac{IPS} estimator fails to reach the optimal performance even after training on $10^9$ clicks, suggesting that it suffers from a high-variance problem. 
This aligns with the findings of \citet{oosterhuis2022doubly}, who reports similar observations. 
In terms of safety, we note that when the click data is limited ($N < 10^5$), the \ac{DR} estimator performs much worse than the logging policy, i.e., it exhibits unsafe behavior, which can lead to a negative user experience if deployed online. 
A likely explanation is that when the click data is limited, the regression estimates ($\hat{R}(d)$, Eq.~\ref{cltr-obj-dr}) have high errors, resulting in a large performance degradation, compared to \ac{IPS}.

Our proposed safety methods, safe \ac{DR} and \ac{PRPO}, reach the performance of the logging policy within $\sim$500 queries on all datasets. 
For the safe \ac{DR} method, we set the confidence parameter $\delta=0.95$. For the \ac{PRPO} method, we set $\delta(N)=\frac{100}{N}$.  
On the MSLR and the ISTELLA dataset, we see that \ac{PRPO} reaches logging policy performance with almost $10^3$ fewer queries than the \ac{DR} method.  
Thus, our results demonstrate that our proposed methods, safe \ac{DR} and \ac{PRPO}, can be safely deployed, and avoid the initial period of bad performance of \ac{DR}, whilst providing the same state-of-the-art performance at convergence.


\setlength{\tabcolsep}{0.15em}
{\renewcommand{\arraystretch}{0.63}
\begin{figure*}[th!]
\vspace{-\baselineskip}
\centering
\begin{tabular}{c r r r r}
&
 \multicolumn{1}{c}{  Yahoo! Webscope}
&
 \multicolumn{1}{c}{  MSLR-WEB30k}
&
 \multicolumn{1}{c}{  Istella}
 &

\\
\rotatebox[origin=lt]{90}{\hspace{0.65cm} \small NDCG@5} &
\includegraphics[scale=0.465]{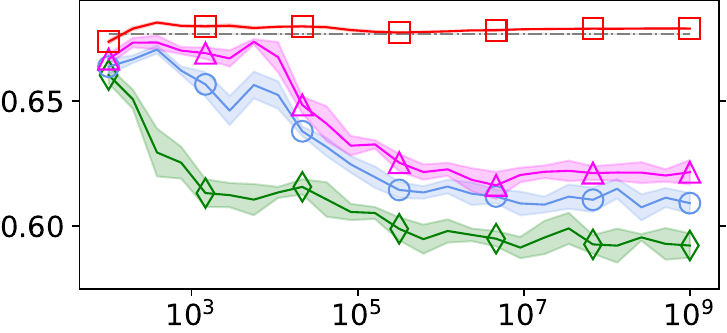} &
\includegraphics[scale=0.465]{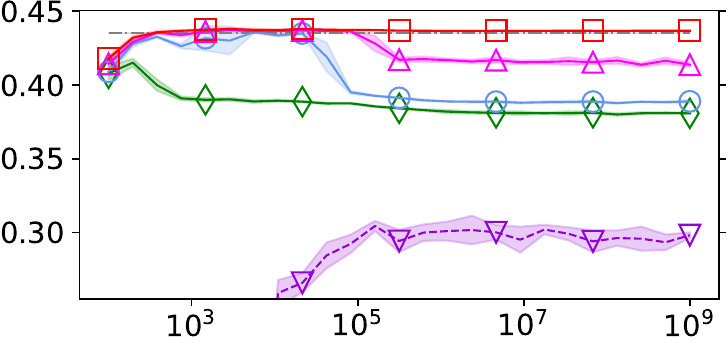} &
\includegraphics[scale=0.465]{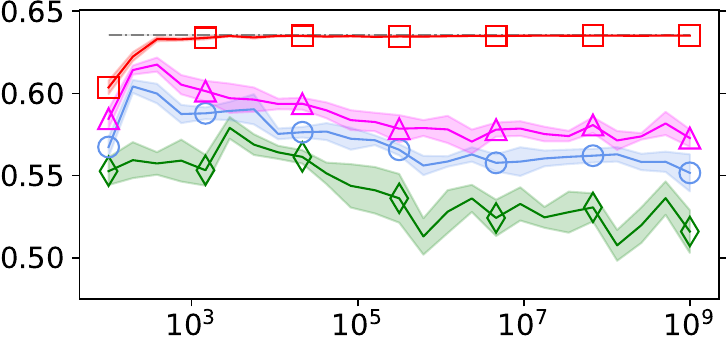} 
\\
\multicolumn{4}{c}{
\includegraphics[scale=.4]{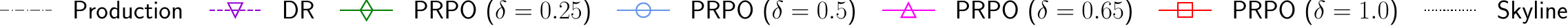}
}
\\
\rotatebox[origin=lt]{90}{\hspace{0.65cm} \small NDCG@5} &
\includegraphics[scale=0.465]{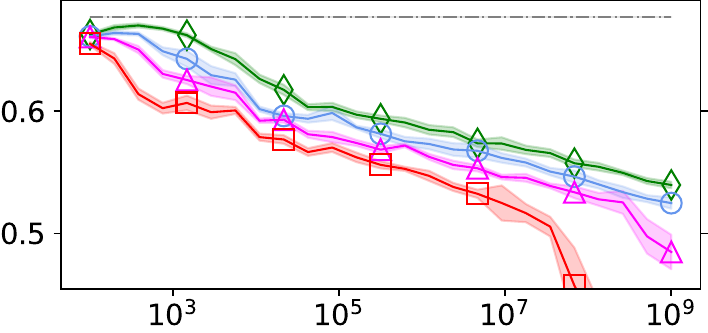} &
\includegraphics[scale=0.465]{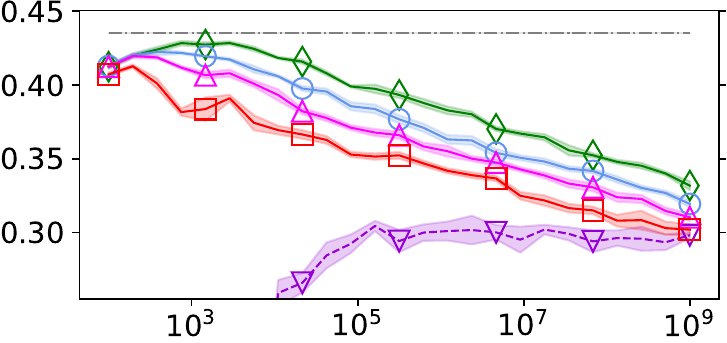} &
\includegraphics[scale=0.465]{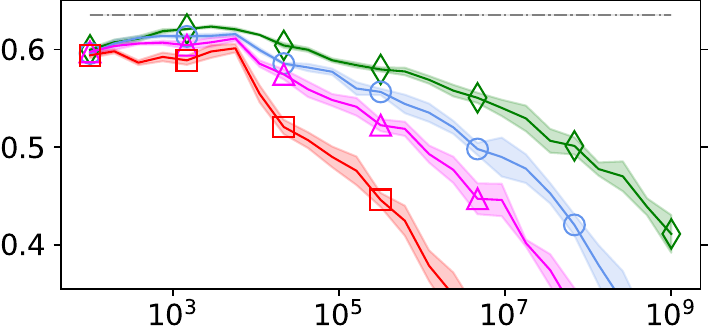}
\\
& \multicolumn{1}{c}{\small \hspace{1.75em} Number of interactions simulated ($N$)}
& \multicolumn{1}{c}{\small \hspace{1.75em} Number of interactions simulated ($N$)}
& \multicolumn{1}{c}{\small \hspace{1.75em} Number of interactions simulated ($N$)}
\\[2mm]
\multicolumn{4}{c}{
\includegraphics[scale=.4]{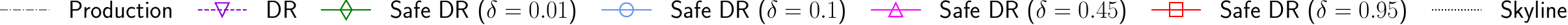}
}
\end{tabular}
\caption{
    Performance of the proposed safe \ac{DR} and \ac{PRPO} with the adversarial click model. 
    Top: sensitivity analysis results for the \ac{PRPO} method with varying clipping parameter ($\delta$). 
    Bottom: sensitivity analysis for the safe \ac{DR} method with varying safety confidence parameter ($\delta$). 
    Results are averaged over 10 independent runs; the shaded areas indicate $80\%$ prediction intervals.
}
\label{fig:ablationresults_adv}
\end{figure*}
}

\header{Sensitivity analysis of the safety parameter}
To understand the tradeoff between safety and utility, we performed a sensitivity analysis by varying the safety parameter ($\delta$) for the safe \ac{DR} method and \ac{PRPO}. 
The top row of Figure~\ref{fig:ablationresults} shows us the performance of the \ac{PRPO} method with different choices of the clipping parameter $\delta$ as a function of dataset size ($N$). 
We report results with the setting of the $\delta$ parameter, which results in different clipping widths. 
For the setting $\delta=\frac{0.01}{N}$ and $\delta = \frac{100}{N}$, the clipping range width grows linearly with the dataset size $N$. Hence, the resulting policy is safer at the start 
but converges to the \ac{DR} estimator when $N$ increases. 
With $\delta=\frac{0.01}{N}$, the clipping range is wider at the start. As a result, it is more unsafe than when $\delta=\frac{100}{N}$, which is the safest amongst all. 
For the case where the range grows logarithmically ($\delta=\frac{1}{\log(N)}$), the method is more conservative throughout, i.e., it is closer to the logging policy since the clipping window grows only logarithmically with $N$.
For the extreme case where the clipping range is a constant ($\delta=1$), \ac{PRPO} avoids any change with respect to the logging policy, and as a result, it sticks closely to the logging policy. 

The botton row of Figure~\ref{fig:ablationresults} shows the performance of the safe \ac{DR} method with varying confidence parameter values ($\delta$). 
Due to the nature of the generalization bound (Eq.~\ref{dr-objgenbound}), the confidence parameter is restricted to: $0 \leq \delta \leq 1$.
We vary the confidence parameters in the range $\delta \in \{0.01,0.1,0.45,0.95\}$.
We note that a lower $\delta$ value results in higher safety, and vice-versa. 
Until $N < 10^5$, there is no noticeable difference in performance. 
For the Yahoo!\ Webscope dataset, almost all settings result in a similar performance. 
For the MSLR and ISTELLA datasets, when $N < 10^5$, a lower $\delta$ value results in a more conservative policy, i.e., a policy closer to the logging policy. 
However, the performance difference with different setups is less drastic than with the \ac{PRPO} method. 
Thus, we note that the safe \ac{DR} method is \emph{less flexible} in comparison to \ac{PRPO}.

Therefore, compared to our safe \ac{DR} method, we conclude that our \ac{PRPO} method provides practitioners with greater flexibility and control when deciding between safety and utility. 

\header{Robustness analysis using an adversarial click model}
To verify our initial claim that our proposed \ac{PRPO} method provides safety guarantees \emph{unconditionally}, we report results with clicks simulated via the adversarial click model (Eq.~\ref{click_simulation_adv}). 
With the adversarial click setup, the initial user behavior assumptions (Assumption~\ref{assumption:trustbias}) \emph{do not hold}. 
The top row of Figure~\ref{fig:ablationresults_adv} shows the performance of the \ac{PRPO} method with different safety parameters when applied to the data collected via the adversarial click model. 
We vary the $\delta$ parameter for \ac{PRPO} in the range $\{0.25,0.5,0.65,1.0\}$, e.g.,  $\delta=0.5$ results in $\epsilon_{-}=0.5$ and $\epsilon_{+} = 2$.
With the constant clipping range ($\delta=1$), we notice that after $\sim$400 queries, the \ac{PRPO} methods performance never drops below the safe logging policy performance. 
For greater values of $\delta$, there are drops in performance but they are all bounded. 
For the Yahoo! Webscope dataset, the maximum drop in the performance is $\sim$12$\%$; for the MSLR30K dataset, the maximum performance drop is $\sim$10$\%$; and finally, for the Istella dataset, the maximum drop is $\sim$20$\%$.
Clearly, these observations show that \ac{PRPO} provides robust safety guarantees, that are reliable even when user behavior assumptions are wrong.

In contrast, the generalization bound of our safe \ac{DR} method (Theorem~\ref{CLTR-bound}) holds only when the user behavior assumptions are true. 
This is not the case in the bottom row of Figure~\ref{fig:ablationresults_adv}, which shows the performance of the safe \ac{DR} method under the adversarial click model. 
Even with the setting where the safety parameters have a high weight ($\delta=0.01$), as the click data size increases, the performance drops drastically. 
Regardless of the exact choice of $\delta$, the effect of the regularization of safe \ac{DR} disappears as $N$ grows, thus in this adversarial setting, it is only a matter of time before the performance of safe \ac{DR} degrades dramatically.

\section{Conclusion}
In this paper, we have introduced the first safe \ac{CLTR} method that uses state-of-the-art \ac{DR} estimation and corrects trust bias.
This is a significant extension of the existing safety method for \ac{CLTR} that was restricted to position bias and \ac{IPS} estimation.
However, in spite of the importance of this extended safe \ac{CLTR} approach, it heavily relies on user behavior assumptions.
We argue that this means it only provides a \emph{conditional} concept of safety, that may not apply to real-world settings.
To address this limitation, we have made a second contribution: the \acfi{PRPO} method.
\ac{PRPO} is the first \ac{LTR} method that provides \emph{unconditional} safety, that is applicable regardless of user behavior.
It does so by removing incentives to stray too far away from a safe ranking policy.
Our experimental results show that even in the extreme case of adversarial user behavior  \ac{PRPO} results in safe ranking behavior, unlike existing safe \ac{CLTR} approaches.

\ac{PRPO} easily works with existing \ac{LTR} algorithms and relevance estimation techniques.
We believe it provides a flexible and generic framework that enables practitioners to apply the state-of-the-art \ac{CLTR} method with strong and robust safety guarantees. 
Future work may apply the proposed safety methods to exposure-based ranking fairness~\cite{oosterhuis2021computationally,yadav2021policy} and to safe online \ac{LTR}~\cite{oosterhuis2021unifying}.

\appendix

\section{Appendix: Extended Safety Proof}

\begin{lemma}
\label{cov-lemma}
Under the trust bias click model (Assumption~\ref{assumption:trustbias}), and
    given the trust bias parameter $\alpha_k, \beta_k$, the regression model estimates $\hat{R}_d$ and click indicator $c(d)$, the following holds:
    \begin{equation}
    \mathrm{Cov}_{y,c}\mleft[ c(d)  -  \beta_{k(d)}, \alpha_{k(d)}\hat{R}_d \mright] \geq 0.
    \end{equation}
\end{lemma}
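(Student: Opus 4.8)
The plan is to exploit that, in the statement, the regression estimate $\hat{R}_d$ is conditioned upon and hence is a deterministic constant, so that the second argument $\alpha_{k(d)}\hat{R}_d$ of the covariance depends on the randomness only through the sampled ranking $y$ (via the rank $k(d) = \mathrm{rank}(d \mid y)$) and not at all on the clicks $c$. Writing $X = c(d) - \beta_{k(d)}$ and $Y = \alpha_{k(d)}\hat{R}_d$, I would apply the law of total covariance by conditioning on $y$:
\begin{equation*}
\mathrm{Cov}_{y,c}[X, Y] = \mathbb{E}_y\big[ \mathrm{Cov}_c[X, Y \mid y] \big] + \mathrm{Cov}_y\big[ \mathbb{E}_c[X \mid y], \mathbb{E}_c[Y \mid y] \big].
\end{equation*}
The first term vanishes identically: given $y$, the quantity $Y$ is constant, so $\mathrm{Cov}_c[X, Y \mid y] = 0$.

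Next I would evaluate the two conditional expectations in the surviving term using the trust bias model. By Assumption~\ref{assumption:trustbias}, if $y$ displays $d$ at rank $k(d)$ then $\mathbb{E}_c[c(d) \mid y] = \alpha_{k(d)} P(R=1 \mid d) + \beta_{k(d)}$, so $\mathbb{E}_c[X \mid y] = \alpha_{k(d)} P(R=1 \mid d)$; and $\mathbb{E}_c[Y \mid y] = \alpha_{k(d)}\hat{R}_d$ because $Y$ is already $y$-measurable. Substituting and pulling out the constants $P(R=1 \mid d)$ and $\hat{R}_d$ gives
\begin{equation*}
\mathrm{Cov}_{y,c}[X, Y] = \mathrm{Cov}_y\big[ \alpha_{k(d)} P(R=1 \mid d), \alpha_{k(d)}\hat{R}_d \big] = P(R=1 \mid d)\, \hat{R}_d\, \mathrm{Var}_y[ \alpha_{k(d)} ].
\end{equation*}

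I would then close the argument with a sign check: $P(R=1 \mid d) \ge 0$ since it is a probability, $\hat{R}_d \ge 0$ since it is a (clipped) relevance estimate in $[0,1]$, and $\mathrm{Var}_y[\alpha_{k(d)}] \ge 0$ as a variance, so the product is non-negative, which is exactly the claim. I do not expect a genuine obstacle here; the two points that merit a sentence of care are (i) the convention that a document $d$ left outside the displayed top-$K$ by $y$ contributes $\alpha_{k(d)} = \beta_{k(d)} = 0$, which does not disturb any of the steps, and (ii) the implicit assumption, standard in the \ac{DR} analysis of \citet{oosterhuis2022doubly}, that $\hat{R}_d$ is formed independently of the realized clicks (e.g.\ via a held-out or cross-fitted regression model), which is what licenses treating $Y$ as deterministic given $y$. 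Equivalently, one can skip the covariance-decomposition machinery and verify the identity directly: $\mathbb{E}[XY] = \hat{R}_d P(R=1 \mid d)\,\mathbb{E}_y[\alpha_{k(d)}^2]$ and $\mathbb{E}[X]\mathbb{E}[Y] = \hat{R}_d P(R=1 \mid d)\,\mathbb{E}_y[\alpha_{k(d)}]^2$, whose difference is the same non-negative quantity.
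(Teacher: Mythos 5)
Your proof is correct and essentially matches the paper's: both reduce the covariance to the same quantity $P(R{=}1 \mid d)\,\hat{R}_d\,\mathrm{Var}_y[\alpha_{k(d)}]$, with your law-of-total-covariance framing being only a cleaner packaging of the paper's direct expansion of $\mathbb{E}[XY]-\mathbb{E}[X]\mathbb{E}[Y]$ (which you also supply as your closing alternative). If anything you are slightly more careful: you make explicit the non-negativity of $\hat{R}_d$ and the requirement that $\hat{R}_d$ be deterministic given the query, both of which the paper uses silently (its final line even drops the $\hat{R}_d$ factor).
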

\begin{proof}
\vspace{-0.75\baselineskip}
    The covariance term can be rewritten as:
    \begin{align}
        &\mathrm{Cov}_{y,c}\mleft[ c(d) -  \beta_{k(d)}, \alpha_{k(d)}\hat{R}_d \mright] \nonumber \\
         &= \mathbb{E}_{y,c} \mleft[ (c(d)  -  \beta_{k(d)}) \alpha_{k(d)}\hat{R}_d  \mright] - \mathbb{E}_{y,c} \mleft[ c(d)  -  \beta_{k(d)} \mright] \mathbb{E}_{y} \mleft[ \alpha_{k(d)}\hat{R}_d  \mright] \nonumber \\    
         &= \hat{R}_d \Big(\mathbb{E}_{y,c} \mleft[ c(d) \alpha_{k(d)} \mright]   - \mathbb{E}_{y} \mleft[ \beta_{k(d)} \alpha_{k(d)}  \mright] - R_d \; \rho_{0}(d)^2  \Big),
         \label{cov-expand}
    \end{align}
    where use $\rho_{0}(d)=\mathbb{E}_{y,c} \mleft[ \alpha_{k(d)}  \mright]$ and $\mathbb{E}_{y,c}\mleft[ (c_i(d) - \beta_{k_i(d)} )/\rho_{0}(d) \mright]  = R_d$~\cite{oosterhuis2022doubly}.
    Expanding the first expectation term in the expression:
    \begin{align}
        & \underset{y,c}{\mathbb{E}} \mleft[ c(d) \alpha_{k(d)} \mright] =  \!\! \sum_{y \in \pi_0} \pi_0(y) \alpha_{k(d)} P(C=1 \mid d,y) 
        =\!\! \sum_{y \in \pi_0} \pi_0(y) \alpha_{k(d)}
        \nonumber \\[-1.2ex] &
        \cdot \mleft( \alpha_{k(d)} R_d + \beta_{k(d)} \mright) 
        = R_d \mathbb{E}_{y} \mleft[ \alpha_{k(d)}^2 \mright] + \mathbb{E}_{y} \mleft[ \alpha_{k(d)} \beta_{k(d)} \mright],
    \end{align}
    where we substitute click model equation $P(C=1 \mid d,y)$ (Eq.~\ref{cltr-obj-dr}). Substituting it back in Eq.~\ref{cov-expand}, we get:
    \begin{align}
        &\mathrm{Cov}_{y,c}\mleft[ c(d) -  \beta_{k(d)}, \alpha_{k(d)}\hat{R}_d \mright] = R_d \mathbb{E}_{y} \mleft[ \alpha_{k(d)}^2 \mright] - R_d \; \mathbb{E}_{y} \mleft[ \alpha_{k(d)}  \mright]^2  \nonumber \\[-1ex]
        & R_d \Big(\mathbb{E}_{y} \mleft[ \alpha_{k(d)}^2 \mright] -  \mathbb{E}_{y} \mleft[ \alpha_{k(d)}  \mright]^2 \Big) = R_d \mathrm{Var}_{y} \mleft[ \alpha_k(d) \mright] \geq 0.    \qedhere
    \end{align}
\end{proof}

\subsection{Proof of Theorem~\ref{CLTR-bound}}\label{sec:perfbound-proof}
    \begin{proof}
        As per Cantelli's inequality~\cite{ghosh2002probability}, the following inequality must hold with probability $1-\delta$:
        %
        \begin{equation}
            U(\pi) \geq \hat{U}_{\text{DR}}(\pi) - \sqrt{ \frac{1-\delta}{\delta} \mathrm{Var}_{q,y,c}\mleft[\hat{U}_{\text{DR}}(\pi)\mright]}.
            \label{inequality} 
        \end{equation}
        Following a similar approach as previous works~\cite{gupta2023safe,wu2018variance}, we look for an upper-bound on the variance of the \ac{DR} estimator.
        From the definition of $\hat{U}_{\text{DR}}(\pi)$ (Eq.~\ref{cltr-obj-dr}), the variance of the \ac{DR} estimator can be expressed as the variance of the second term:
        \begin{equation}
        \mbox{}\hspace*{-2mm}
            \underset{y,c}{\mathrm{Var}}\mleft[\hat{U}_{\text{DR}}(\pi )\mright] \!=\!  \frac{1}{N} \underset{y,c}{\mathrm{Var}}\mleft[ \sum_{d \in D}  \frac{\omega(d)}{\rho_{0}(d)} \big(c(d) \!-\! \alpha_{k(d)}\hat{R}(d) \!-\!  \beta_{k(d)} \big) \mright]. \!\mbox{}
            \label{eq:vardecom}
        \end{equation}
        Using Assumption~\ref{assumption:trustbias} and assuming that document examinations are independent from each other~\cite{gupta2023safe}, we rewrite further:
        \begin{equation}
        \begin{split}
            & N \cdot \underset{y,c}{\mathrm{Var}}\mleft[\hat{U}_{\text{DR}}(\pi )\mright] 
            =  \sum_{d \in D_{q}}^{} \underset{y,c}{\mathrm{Var}}\mleft[  \frac{\omega(d)}{\rho_{0}(d)} \big(c(d) - \alpha_{k(d)}\hat{R}(d) -  \beta_{k(d)} \big)  \mright] \\[-1.2ex]
            & \qquad\qquad =  \sum_{d \in D_{q}}^{} \mleft( \frac{\omega(d)}{\rho_{0}(d)} \mright)^2 \underset{y,c}{\mathrm{Var}}\mleft[ c(d)  -  \beta_{k(d)} - \alpha_{k(d)}\hat{R}(d)  \mright].
            \end{split}
            \label{var-dr}
        \end{equation}
       The total variance can be split into the following:
        \begin{align}
        &\mathrm{Var}_{y,c}\mleft[ c(d)  -  \beta_{k(d)} - \alpha_{k(d)}\hat{R}_{i}(d)  \mright]  = \mathrm{Var}_{y,c}\mleft[ \alpha_{k(d)}\hat{R}(d)  \mright]   \\
        &\;\;\;\;\;\;\;\; + \mathrm{Var}_{y,c}\mleft[ c(d)  -  \beta_{k(d)} \mright] - 2 \mathrm{Cov}_{y,c}\mleft[ c(d)  -  \beta_{k(d)}, \alpha_{k(d)}\hat{R}(d) \mright]. \nonumber
        \end{align}
        Using Lemma~\ref{cov-lemma}, we upper-bound the total variance term to:
        \begin{align}
            &\mathrm{Var}_{y,c}\mleft[ c(d)  -  \beta_{k(d)} - \alpha_{k(d)}\hat{R}(d)  \mright] \nonumber \\
            & \leq \mathrm{Var}_{y,c}\mleft[ \alpha_{k(d)}\hat{R}(d)  \mright] + \mathrm{Var}_{y,c}\mleft[ c(d)  -  \beta_{k(d)} \mright]. 
        \label{var-split}
        \end{align}
        Next, we consider the two variance terms separately; with the variance of the first term following:
    \begin{equation*}
        \underset{y,c}{\mathrm{Var}}\mleft[ \alpha_{k(d)}\hat{R}(d) \mright] = \underset{y,c}{\mathrm{Var}}\mleft[ \alpha_{k(d)} \mright] \hat{R}(d)^2 
         \leq \mathbb{E}_{y,c} \mleft[ \alpha_{k(d)}^2 \mright] \leq \mathbb{E}_{y} \mleft[ \alpha_{k(d)} \mright]. 
    \end{equation*}
       where we make use of the fact that $\hat{R}_d^2 \leq 1$, and $\alpha \in [0,1] \rightarrow \alpha_k^2 \leq \alpha_k$.
       Next, we consider the second term:
        \begin{align}
        &\mathrm{Var}_{y,c}\mleft[ c(d)  -  \beta_{k(d)} \mright] \leq  \mathbb{E}_{y,c} \mleft[ \big(c(d)  -  \beta_{k(d)} \big)^2 \mright] \\
        & \quad = \mathbb{E}_{y,c} \mleft[ c(d)^2  +  \beta_{k(d)}^2 - 2 c(d) \beta_{k(d)} \mright] 
        \leq \mathbb{E}_{y,c} \mleft[ c(d) \mright]   + \mathbb{E}_{y} \mleft[ \beta_{k(d)} \mright],  \nonumber
        \end{align}
       since $c(d)^2=c(d)$, $\beta_k^2 \leq \beta_k$, and $ \mathbb{E}_{y,c} \mleft[ c(d) \beta_{k(d)} \mright] \geq 0$.
        Substituting the click probabilities with Eq.~\ref{affine-click-model}, we get:
        \begin{align}
            & \mathbb{E}_{y,c} \mleft[ c(d) \mright]   + \mathbb{E}_{y,c} [ \beta_{k(d)} ] = \mathbb{E}_{y,c}[ \alpha_{k(d)}] P(R=1 |\, d) + 2 \, \mathbb{E}_{y,c} [ \beta_{k(d)}] \nonumber \\
            &\leq  \mathbb{E}_{y} \mleft[ \alpha_{k(d)} \mright] + 2 \, \mathbb{E}_{y}[ \beta_{k(d)}],
        \end{align}
        where we use the fact that $P(R=1 \mid d) \leq 1$.
        Putting together the bounds on both parts of Eq.~\ref{var-split}, we have:
        \begin{equation}
        \mathrm{Var}_{y,c}\mleft[ c(d)  -  \beta_{k(d)} - \alpha_{k(d)}\hat{R}(d)  \mright] \leq 2 \omega_0(d),
        \end{equation}
        where $\omega_0(d) = \mathbb{E}_{y} \mleft[ \alpha_{k(d)} \mright] + \mathbb{E}_{y} \mleft[ \beta_{k(d)} \mright]$. 
        Substituting the final variance upper bound in Eq.~\ref{var-dr}, we get:
        \begin{align}
            & \underset{y,c}{\mathrm{Var}}\mleft[ \sum_{d \in D}  \frac{\omega(d)}{\rho_{0}(d)} \big(c(d) - \alpha_{k(d)}\hat{R}(d) -  \beta_{k(d)} \big) \mright]  \!\leq\!  2\!\!\! \sum_{d \in D_{q}}^{} \! \mleft( \frac{\omega(d)}{\rho_{0}(d)} \mright)^2 \!\! \omega_0(d) \nonumber \\[-2ex]
            & \quad =  2 \sum_{d \in D_{q}}^{} \mleft( \frac{\omega(d)}{\omega_{0}(d)} \mright)^2 \omega_0(d)  \mleft(\frac{\omega_0(d)}{\rho_0(d)}\mright)^2,
            \label{divergence}
        \end{align}
        where we multiply and divide by $\omega_0(d)^2$ in the third step.  
        Finally, we make use of the fact: $\frac{\omega_{0}(d)}{\rho_{0}(d)} \leq \max_{\pi_{0}} \frac{\omega_{0}(d)}{\rho_{0}(d)} \leq 1 + \max_{k} \frac{\beta_k}{\alpha_k}$,
        and
       put everything back together:
       \begin{align}
        N \cdot \mathrm{Var}_{y,c}\mleft[\hat{U}_{\text{DR}}(\pi )\mright] & \leq  2 Z \mleft( 1 + \max_{k} \frac{ \beta_k}{\alpha_k} \mright)^2 \sum_{d \in D_{q}}^{} \mleft( \frac{\omega'(d)}{\omega_{0}'(d)} \mright)^2 \omega_0'(d) \nonumber \\[-1.2ex]
        &= 2 Z \mleft( 1 + \max_{k} \frac{ \beta_k}{\alpha_k} \mright)^2 d_2(\omega \,\Vert\, \omega_0).
    \end{align}
    where $d_2(\omega \,\Vert\, \omega_0)$ is the Renyi divergence between the normalized expected exposure $\omega'(d)$ and $\omega_{0}'(d)$ (cf.\ Eq.~\ref{eq:actionbaseddiv}).
    Substituting this into the upper-bound on variance in Eq.~\ref{inequality} completes the proof.
    \end{proof}

\subsection{Proof of Theorem~\ref{PRPO-proof}}\label{sec:prpo-proof}
    \begin{proof}
    Given a logging policy ranking $y_0$, a user defined metric weight $\omega$, and non-zero $r(d \mid q)$, for the choice of the clipping parameters $\epsilon_{-} = \epsilon_{+} = 1$, 
    the ranking $y^*(\epsilon_{-},\epsilon_{+})$ that maximizes the \ac{PRPO} objective (Eq.~\ref{eq:prpo_obj}) will be the same as the logging ranking $y_0$, i.e. $y^*(\epsilon_{-},\epsilon_{+})=y_0$.
    This is trivial to prove since any change in ranking can only lead in a decrease in the clipped ratio weights, and thus, a decrease in the \ac{PRPO} objective.
    Therefore, $y^*(\epsilon_{-}=1,\epsilon_{+}=1)=y_0$ when $\epsilon_{-} = \epsilon_{+} = 1$.
    Accordingly: $| U(y_0) -  U(y^*(\epsilon_{-}=1, \epsilon_{+}=1)) | = 0$ directly implies Eq.~\ref{eq:prpotheorem}.
    This completes our proof.
\end{proof}

Whilst the above proof is performed through the extreme case where $\epsilon_{-} = \epsilon_{+} = 1$ and the optimal ranking has the same utility as the logging policy ranking,
other choices of $\epsilon_{-}$ and $\epsilon_{+}$ bound the difference in utility to a lesser degree and allow for more deviation.
As our experimental results show, the power of PRPO is that it gives practitioners direct control over this maximum deviation.

\section*{Acknowledgements}
This research was supported by Huawei Finland, 
the Dutch Research Council (NWO), under project numbers VI.Veni.222.269, 024.004.022, NWA.1389.20.\-183, and KICH3.LTP.20.006, 
and 
the European Union's Horizon Europe program under grant agreement No 101070212.
This work used the Dutch national e-infrastructure with the support of the SURF Cooperative using grant no.\ EINF-8200.
All content represents the opinion of the authors, which is not necessarily shared or endorsed by their respective employers and/or sponsors.

\balance
\bibliographystyle{ACM-Reference-Format}
\bibliography{bibliography}

\end{document}